\documentclass[preprint,12pt]{elsarticle}
\usepackage[utf8]{inputenc}
\usepackage{amsmath,amssymb,amsthm}
\usepackage{geometry}
\usepackage{physics}
\usepackage{booktabs}
\usepackage{stmaryrd}
\newtheorem{definition}{Definition}[section]
\newtheorem{theorem}{Theorem}[section]
\newtheorem{proposition}{Proposition}[section]

\usepackage{enumitem}
\usepackage{url}

\usepackage{tikz}
\usetikzlibrary{positioning,arrows.meta,fit,calc}

\DeclareMathOperator{\Med}{Med}

\journal{Information Sciences}

\begin{document}

\begin{frontmatter}



\title{Mediative Fuzzy Logic: From Type-1 Foundations to Type-2, Type-3 and Quantum Extensions}


\author{Oscar Montiel Ross} 
\ead{oross@ipn.mx}
\affiliation{organization={Instituto Politécnico Nacional - CITEDI},
            addressline={AV. Instituto Politécnico Nacional 1310}, 
            city={Tijuana},
            postcode={22435}, 
            state={Baja California},
            country={México}}

\begin{abstract}
Mediative Fuzzy Logic was conceived as a practical scheme for reconciling hesitant or conflicting assessments in fuzzy control and decision-making. However, its logical and semantic foundations remain underdeveloped, especially beyond operational type-1 settings. This article develops a unified account of the type-1 core together with interval type-2, granular type-3, and quantum extensions. We characterize the mediative operator as a convex aggregation controlled by hesitation and contradiction, model mediative truth values as independent truth-falsity pairs in a continuous bilattice-like structure, and introduce a propositional system extending a standard t-norm-based fuzzy logic with a mediative connective. We establish soundness, paraconsistency, and conservativity over the underlying fuzzy base for formulas without mediation, and formulate coherent semantic extensions to interval type-2 truth values, granule-indexed local evaluations, and effects and density operators on Hilbert spaces. An autonomous-braking sensor-fusion example illustrates how the framework supports transparent, conservative, and safety-first decisions under incomplete, heterogeneous, and mildly contradictory evidence. Under suitable assumptions, the higher-level formulations reduce to the type-1 case, clarifying coherence across levels and reliably supporting future work in intelligent decision systems.
\end{abstract}



\begin{keyword}
fuzzy logic \sep mediative reasoning \sep type-2 fuzzy sets \sep type-3 fuzzy sets \sep granular computing \sep quantum logic



\end{keyword}

\end{frontmatter}



\section{Introduction} \label{sec:introduction}
Real-world decision problems are typically based on information that is not only imprecise but also incomplete and, in many cases, genuinely contradictory. Classical fuzzy logic is well adapted to graded truth~\cite{Zadeh1965,Zimmermann2011,KlirYuan1995}, and its logical foundations are now well established~\cite{Hajek1998}. However, in the standard setting, falsity is not an independent degree: it is obtained by applying a fixed fuzzy negation $N$ to the truth degree, typically the standard negation $N(\mu)=1-\mu$. This assumption becomes too restrictive when different experts, sensors, or models provide conflicting assessments of the same proposition.

To relax this constraint, intuitionistic fuzzy sets and related frameworks separate truth and falsity degrees and introduce an additional component of hesitation~\cite{Atanassov1986,Atanassov1999,Atanassov2020}. Such approaches naturally represent incomplete information; yet, they are not primarily tailored to explicitly capture and reason with persistent contradictions across sources. Persistent inconsistency is more naturally addressed within paraconsistent and bilattice-based logics~\cite{Belnap1977,Ginsberg1988,ArieliAvron1996}, which support controlled reasoning in the presence of conflicting evidence. In parallel, intuitionistic fuzzy sets and their aggregation operators are widely used in information fusion as a principled framework for combining uncertain and potentially conflicting evidence from multiple sources~\cite{Atanassov2014InformationFusion,Xu2010InformationFusion}.

Mediative Fuzzy Logic (MFL) was first introduced in an operational form as a scheme that combines an agreement channel and a non-agreement channel into a single mediative output~\cite{Montiel2008MFL,Montiel2009MFLPopSize}. In that formulation, each proposition is described by a membership function (agreement), a non-membership function (non-agreement), and the associated hesitation and contradiction fuzzy sets. The mediative operator then aggregates the outputs of the two subsystems so that the resulting value jointly reflects an intuitionistic-style hesitation margin and a contradiction index induced by the contradiction fuzzy set~$C$. In applications to control and diagnosis, MFL has been shown to handle both hesitation and contradiction in a smooth and interpretable manner, with successful implementations in medical diagnosis and pandemic modeling~\cite{Iancu2018HeartMFL,SharmaEtAl2021CovidModelMFL,SharmaEtAl2022CovidProjectionMFL}. Nevertheless, the original formulation was not cast as a fully axiomatized logic with explicit algebraic semantics, and most existing works treat MFL primarily as a fuzzy inference scheme rather than as a proof-theoretic logic~\cite{CastilloMelin2023MFLControl,MelinCastillo2025T3MFL}.

At a foundational level, our aim is to provide algebraic and logical foundations for Mediative Fuzzy Logic beyond the original operational control setting. Starting from a type-1 mediative operator with clear algebraic semantics, we define a propositional logic MFL-T1 and develop its basic metatheory, taking as fuzzy base a standard t-norm-based fuzzy logic, such as H{\'a}jek's Basic Logic (BL)~\cite{Hajek1998BFL} or {\L}ukasiewicz logic~\cite{Hajek1998}. Our guiding question is whether mediative fuzzy semantics can reconcile paraconsistency and safety-first aggregation while remaining compatible with standard fuzzy infrastructures.

Building on this type-1 core, we ask how mediative reasoning behaves at higher types. Type-2 and type-3 Mediative Fuzzy Logic (MFL-T2 and MFL-T3) were introduced by Castillo and Melin in their proposals for mediative fuzzy control and type-3 mediative systems~\cite{CastilloMelin2023MFLControl,MelinCastillo2025T3MFL,MelinCastillo2024INFUS}. These contributions outline mediative architectures from type-1 to type-3 and report several control applications; however, they are formulated primarily at the level of system design and do not provide a granular semantics or a logical calculus in which higher-type mediative truth values are treated as structured objects.

In this paper, we develop a systematic mediative semantics for interval type-2 truth values $(\tilde{\mu}_p,\tilde{\nu}_p)$ within an MFL-T2 framework. In this setting, type-2 hesitation and contradiction are made explicit, and a mediative evaluation $\tilde{M}_p$ is obtained via standard type-reduction mechanisms used in type-2 fuzzy decision models~\cite{KM2001,BaskovNoghin2022General,BaskovNoghin2022Implementations}. In contrast to previous combinations of mediative fuzzy logic with type-2 fuzzy controllers and higher-type mediative systems~\cite{CastilloMelin2023MFLControl,MelinCastillo2024INFUS,MelinCastillo2025T3MFL}, or with general type-2 fuzzy logic systems and their decision models~\cite{KM1999,Mendel2001}, our construction treats $(\tilde{\mu}_p,\tilde{\nu}_p)$ as mediative truth values in their own right and links their type-2 footprints of uncertainty directly to the paraconsistent mediative operator.

This semantic viewpoint also supports a granular interpretation at type-3. For MFL-T3, we connect mediative semantics with granular computing and hierarchical reasoning~\cite{BargielaPedrycz2002,Pedrycz2013,PEDRYCZ201493} and introduce explicit aggregation operators over families of local mediative evaluations indexed by granules (e.g., experts, sensors, or time slices). Accordingly, MFL-T2 captures second-order uncertainty, whereas MFL-T3 accounts for hierarchical evidence in multi-source and multi-level settings, and both remain conceptually coherent with the type-1 core. We complement these higher-type semantics with an explicit logical treatment that keeps the type-1 core intact while clarifying how type-2 uncertainty and type-3 heterogeneity propagate through connectives.

Beyond higher-type fuzzy sets, we also introduce Quantum Mediative Fuzzy Logic (QMFL), whose semantics is given in terms of quantum effects and density operators. Our construction relates mediative truth degrees to effect algebras in the sense of Foulis and Bennett~\cite{FoulisBennett1994} and to fuzzy approaches to quantum logics~\cite{Pykacz1992,Pykacz2015}. The logical perspective is informed by many-valued and modal quantum logics~\cite{DallaChiaraGiuntini2002,AldanaLledo2023} and by algebraic and lattice-theoretic studies of effect algebras and $L$-valued quantum structures~\cite{NavaraPtak1998,NavaraPtak1999,ChajdaLaenger2020Residuation}. We also build on recent work on axiomatic treatments of effect-algebra-based logics~\cite{WangWuYang2019,ChajdaHalasLaenger2020}.

The contributions of this paper can be summarized as follows.
\begin{itemize}
  \item We axiomatize the mediative operator as a convex aggregation controlled by hesitation and contradiction parameters, and we establish basic properties such as boundedness between its two input degrees (the outputs of the agreement and non-agreement channels) and reductions to type-1 and intuitionistic-fuzzy combinations.

  \item We introduce mediative truth values as pairs $(\mu,\nu)\in[0,1]^2$ of truth and falsity degrees, from which hesitation $\pi$ and contradiction $\zeta$ are derived, and we endow this space with suitable conjunction, disjunction, and negation operations, together with truth and information orders, so that $[0,1]^2$ carries a continuous bilattice-like structure.

  \item We define a propositional logic MFL-T1 with standard fuzzy connectives plus a mediative connective, and we provide a Hilbert-style axiom system that extends a chosen fuzzy base logic, such as H\'ajek's Basic Logic (BL) or {\L}ukasiewicz logic.

  \item We prove that MFL-T1 is sound for its mediative semantics, paraconsistent in a precise sense, and a conservative extension of the underlying fuzzy logic on formulas without the mediative operator.

  \item We extend the mediative semantics to type-2 fuzzy sets (MFL-T2), modeling second-order uncertainty about truth, falsity, hesitation, and contradiction, and we discuss how standard type-reduction methods can be used to interpret mediative degrees under second-order uncertainty.

  \item We propose a type-3 granular extension (MFL-T3) that organizes mediative truth into multi-level granular structures indexed by arbitrary granules (e.g., experts, sensors, or time slices) within a unified framework.

  \item We introduce Quantum Mediative Fuzzy Logic (QMFL), whose semantics is formulated in terms of quantum effects and states on Hilbert spaces, and we illustrate a simple embedding of type-1 mediative degrees into QMFL.

  \item We illustrate the safety-first interpretation of mediative truth degrees through a detailed case study of sensor fusion for obstacle detection and braking decisions in autonomous driving.

  \item We clarify how standard type-1 fuzzy and intuitionistic fuzzy semantics arise as special cases of MFL-T1 by imposing suitable constraints on mediative truth values and parameters, under which the mediative operator reduces to the usual $t$-norm-based combinations.

  \item We establish reduction results that connect the different levels of the framework: granular MFL-T3 reduces to MFL-T2 or MFL-T1 when granules become homogeneous, and QMFL reduces to the classical mediative semantics when the relevant effects commute and the quantum states are diagonal in a common basis, thereby ensuring multi-level coherence.
\end{itemize}

Existing works on Mediative Fuzzy Logic~\cite{CastilloMelin2023MFLControl} and its type-3 extension~\cite{MelinCastillo2025T3MFL} do not develop, to the best of our knowledge, the kind of granular semantics used here, where mediative truth values are treated as families of local evaluations indexed by arbitrary granules (e.g., experts, sensors, or time slices) and combined through explicit aggregation operators. Accordingly, the definitions in Section~6 provide a first granular formalization of MFL-T3 at the semantic level. In addition, the quantum extension proposed here makes the connection with effect-algebraic semantics explicit, aligning mediative reasoning with effect-based viewpoints in quantum granular computing~\cite{ross2025foundationsquantumgranularcomputing}. To our knowledge, there is also no prior work that treats interval type-2 pairs as mediative truth values in their own right.

The paper is organized as follows. Section~\ref{sec:background} reviews the intuitionistic-fuzzy and paraconsistent background. Section~\ref{sec:type1-operator} introduces the type-1 mediative operator and its algebraic semantics. Section~\ref{sec:prop-MFL} defines the propositional logic MFL-T1 and establishes its basic metatheoretical properties. Sections~\ref{sec:MFL2}--\ref{sec:QMFL} develop MFL-T2, MFL-T3, and QMFL, respectively. Section~\ref{sec:example} presents the safety-first sensor-fusion case study, and Section~\ref{sec:conclusions} concludes. For convenience, Appendix~A collects the main notation and logical symbols used throughout the paper (Table~A.1).

\section{Background and motivation}
\label{sec:background}
In this section, we recall the intuitionistic-fuzzy and paraconsistent perspectives that motivate Mediative Fuzzy Logic. The goal is not to provide an exhaustive survey, but to highlight aspects directly relevant to the mediative semantics developed in the subsequent sections.

In classical fuzzy logic, each proposition $p$ is assigned a truth degree $\mu_p\in[0,1]$, and the degree of falsity is determined by a fixed negation, typically $1-\mu_p$~\cite{Zimmermann2011}. In this setting, uncertainty is collapsed into a single dimension: once $\mu_p$ is fixed, there is no independent degree of falsity or inconsistency. In contrast, intuitionistic fuzzy sets associate each element or proposition $p$ with a pair $(\mu_p,\nu_p)$ of truth and falsity degrees, subject to the constraint $\mu_p+\nu_p\le 1$~\cite{Atanassov1986,Atanassov1999,Atanassov2020}. The remaining amount $\pi_p = 1-\mu_p-\nu_p$ is interpreted as hesitation (or indeterminacy) and explicitly captures incomplete information.

Paraconsistent logics, on the other hand, admit the possibility that both $p$ and $\neg p$ may hold to significant degrees without collapsing into triviality~\cite{Ripley2015Paraconsistent}. Bilattice-based logics, such as the Dunn--Belnap logic of first-degree entailment, provide a well-known example in which each proposition is evaluated by two coordinates capturing truth and knowledge (or information)~\cite{Belnap1977,Ginsberg1988,ArieliAvron1996}. These frameworks support controlled reasoning in the presence of contradictions, but they are not usually formulated in terms of fuzzy membership and non-membership functions.

Mediative Fuzzy Logic (MFL) was originally proposed in an operational form to combine an agreement channel and a non-agreement channel into a single mediative output~\cite{Montiel2008MFL}. In that formulation, each proposition or element is described by an agreement membership function and a non-agreement membership (or non-membership) function, and the mediative operator aggregates the corresponding fuzzy-system outputs. In applications to control and diagnosis, MFL has been shown to handle both hesitation and contradiction in a smooth and interpretable manner, with successful implementations in medical diagnosis and pandemic modeling~\cite{Iancu2018HeartMFL,SharmaEtAl2021CovidModelMFL,SharmaEtAl2022CovidProjectionMFL}. Nevertheless, the original formulation was not cast as a fully axiomatized logic with clear algebraic semantics. Most existing works treat MFL as a fuzzy inference scheme rather than as a logic in the proof-theoretic sense~\cite{CastilloMelin2023MFLControl,MelinCastillo2025T3MFL}. In particular, when only hesitation is present, MFL reduces to an intuitionistic-style combination, whereas in the presence of contradiction, the mediative output systematically takes an intermediate value between the agreement and non-agreement channels.

The following sections formalize this informal picture into a precise algebraic and logical framework and extend mediative reasoning to type-2, type-3, and quantum settings.

\section{Type-1 Mediative Fuzzy Logic: operator and algebraic semantics}
\label{sec:type1-operator}

We study the mediative operator at an abstract level, independently of any particular logical language. Our starting point is a pair of numerical outputs $a,b\in[0,1]$, typically generated by an agreement channel and a non-agreement channel, together with two parameters $\pi,\zeta\in[0,1]$ representing hesitation and contradiction, respectively.

\begin{definition}[Mediative operator]
\label{def:mediative-operator}
For $a,b\in[0,1]$ and parameters $\pi,\zeta\in[0,1]$, define
\[
  \mathcal{M}(a,b;\pi,\zeta)
  := \Bigl(1-\pi-\frac{\zeta}{2}\Bigr)a + \Bigl(\pi+\frac{\zeta}{2}\Bigr)b.
\]
\end{definition}

We impose the following axioms on $\mathcal{M}$, which constrain the admissible parameters $(\pi,\zeta)$:
\begin{itemize}
  \item[M1] \textbf{(weight normalization).} Let $w_1 = 1-\pi-\zeta/2$ and $w_2 = \pi+\zeta/2$.
  Then $0 \le w_1,w_2 \le 1$ and $w_1+w_2=1$.

  \item[M2] \textbf{(intuitionistic reduction).} If $\zeta = 0$, then
  \begin{equation}
    \label{eq:Med-intuitionistic}
    \mathcal{M}(a,b;\pi,0) = (1-\pi)\,a + \pi\,b,
  \end{equation}
  so that the mediative operator reduces to a convex combination controlled only by the hesitation parameter $\pi$.

  \item[M3] \textbf{(type-1 reduction).} If $\pi = 0$ and $\zeta = 0$, then
  \begin{equation}
    \label{eq:Med-type1}
    \mathcal{M}(a,b;0,0) = a,
  \end{equation}
  that is, the mediative operator reduces to the agreement channel's output.
\end{itemize}

The next results formalize basic properties of $\mathcal{M}$.

\begin{theorem}[Convexity and boundedness]
\label{thm:convexity}
Let $w_1 = 1-\pi-\zeta/2$ and $w_2 = \pi+\zeta/2$, and suppose that $w_1,w_2 \ge 0$ and $w_1+w_2=1$ (axiom M1). Then, for all $a,b \in [0,1]$, we have
\begin{equation}
  \label{eq:Med-boundedness}
  \min(a,b) \;\le\; \mathcal{M}(a,b;\pi,\zeta) \;\le\; \max(a,b).
\end{equation}
\end{theorem}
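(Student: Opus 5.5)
The plan is to recognize $\mathcal{M}(a,b;\pi,\zeta)$ as a convex combination of $a$ and $b$ and then use the elementary fact that a convex combination of two reals lies between them. By Definition~\ref{def:mediative-operator}, $\mathcal{M}(a,b;\pi,\zeta)=w_1a+w_2b$ with $w_1=1-\pi-\zeta/2$ and $w_2=\pi+\zeta/2$. The hypothesis (axiom M1) gives $w_1,w_2\ge 0$ and $w_1+w_2=1$. Note that the identity $w_1+w_2=1$ holds automatically from the definitions, so the only genuine constraint is nonnegativity. Equivalently, $\pi+\zeta/2\in[0,1]$.

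Set $m=\min(a,b)$ and $M=\max(a,b)$, so that $m\le a\le M$ and $m\le b\le M$. Multiplying these inequalities by the nonnegative weights preserves their direction, which yields $w_1m\le w_1a\le w_1M$ and $w_2m\le w_2b\le w_2M$. Adding the two chains gives
\[
(w_1+w_2)\,m \;\le\; w_1a+w_2b \;\le\; (w_1+w_2)\,M .
\]
Substituting $w_1+w_2=1$ then gives exactly \eqref{eq:Med-boundedness}.

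A symmetric alternative avoids the min/max bookkeeping. Assume without loss of generality that $a\le b$, and write $\mathcal{M}=a+w_2(b-a)$. Since $0\le w_2\le 1$ and $b-a\ge 0$, this expression lies in $[a,b]$. The case $b\le a$ follows by the same argument, writing $\mathcal{M}=b+w_1(a-b)$.

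There is no real obstacle here. The only point needing care is that nonnegativity of both weights is essential: if $\pi+\zeta/2>1$, the operator extrapolates beyond $b$. This is why the statement invokes M1 explicitly rather than only $\pi,\zeta\in[0,1]$. As a side remark, the result also shows that $\mathcal{M}$ maps $[0,1]^2$ into $[0,1]$, because $[m,M]\subseteq[0,1]$.
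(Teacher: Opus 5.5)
Your proposal is correct and follows the same route as the paper: write $\mathcal{M}(a,b;\pi,\zeta)=w_1a+w_2b$ as a convex combination under (M1) and conclude that it lies between $\min(a,b)$ and $\max(a,b)$. The only difference is that you prove that last step explicitly by multiplying $m\le a,b\le M$ by the nonnegative weights and adding, which the paper simply cites as a standard fact.
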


\begin{proof}[Proof sketch]
Under Axiom~(M1), we have $w_1,w_2 \ge 0$ and $w_1+w_2=1$, hence
\[
  \mathcal{M}(a,b;\pi,\zeta) = w_1 a + w_2 b
\]
is a convex combination of $a$ and $b$. Any convex combination of two real numbers lies between their minimum and maximum, which yields~\eqref{eq:Med-boundedness}.
\end{proof}

By Axiom~(M1), the admissible parameter pairs $(\pi,\zeta)$ are precisely those for which the induced weights satisfy $w_1,w_2\in[0,1]$ and $w_1+w_2=1$ (equivalently, $\pi \ge 0$, $\zeta \ge 0$, and $\pi+\zeta/2 \le 1$). Hence, the mediative operator is a well-defined convex aggregation of the two channels $a$ and $b$ and cannot extrapolate beyond the interval spanned by its inputs. Moreover, in the induced type-1 semantics we have $\pi(\mu,\nu)=\max\{0,1-\mu-\nu\}$ and $\zeta(\mu,\nu)=\max\{0,\mu+\nu-1\}$ for $(\mu,\nu)\in[0,1]^2$, so $\pi,\zeta\in[0,1]$ and, in particular, they are nonnegative degrees.

\begin{theorem}[Reductions]
\label{thm:reductions}
For all $a,b \in [0,1]$, the mediative operator $\mathcal{M}$ satisfies:
\begin{enumerate}
  \item If $\zeta = 0$, then~\eqref{eq:Med-intuitionistic} holds. In other words, the mediative operator reduces to the intuitionistic-style convex combination controlled solely by the hesitation degree $\pi$.

  \item If $\pi = \zeta = 0$, then~\eqref{eq:Med-type1} holds, recovering the underlying type-1 fuzzy evaluation (the agreement-channel output).
\end{enumerate}
\end{theorem}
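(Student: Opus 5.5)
The plan is to verify both items by direct substitution into Definition~\ref{def:mediative-operator}. No appeal to Theorem~\ref{thm:convexity} or to the axioms is needed, since M2 and M3 are precisely the identities we want to derive. The only point to check is that the parameter values in each item are admissible in the sense of Axiom~(M1), so that we stay in the regime where $\mathcal{M}$ is the intended convex aggregation.

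For item~1, we set $\zeta=0$ in the definition. The weights become $w_1 = 1-\pi$ and $w_2=\pi$, so
\[
\mathcal{M}(a,b;\pi,0) = (1-\pi)a + \pi b,
\]
which is exactly \eqref{eq:Med-intuitionistic}. We then note that $\pi\in[0,1]$ gives $w_1,w_2\in[0,1]$ with $w_1+w_2=1$. Hence (M1) holds automatically, and the expression is a genuine convex combination governed only by $\pi$.

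For item~2, we further set $\pi=0$ in the identity from item~1. This gives $w_1=1$ and $w_2=0$, and hence $\mathcal{M}(a,b;0,0)=1\cdot a+0\cdot b=a$, which is \eqref{eq:Med-type1}. This holds for every $b$, so the non-agreement channel is ignored and we recover the agreement-channel (type-1) evaluation.

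There is no genuine obstacle here. The one subtlety worth remarking on is that the statement is asserted for all $a,b\in[0,1]$ without separately assuming (M1). We handle this by observing that, for $\zeta=0$ and $\pi\in[0,1]$, (M1) is automatically satisfied, so the reductions are consistent with the axioms rather than extra assumptions.
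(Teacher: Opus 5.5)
Your proposal is correct and matches the paper's proof, which likewise just substitutes $\zeta=0$ (respectively $\pi=\zeta=0$) into Definition~\ref{def:mediative-operator}. Your extra observation that (M1) holds automatically when $\zeta=0$ and $\pi\in[0,1]$ is accurate; the paper does not state it.
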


\begin{proof}
Both items follow directly by substituting the corresponding parameter values into the definition of $\mathcal{M}$.
\end{proof}

\begin{theorem}[Effect of contradiction]
\label{thm:contradiction-effect}
Let $a,b \in [0,1]$ and assume $\pi = 0$ and $0 < \zeta < 1$. Then
\[
  \mathcal{M}(a,b;0,\zeta) = \Bigl(1-\frac{\zeta}{2}\Bigr)a + \Bigl(\frac{\zeta}{2}\Bigr)b.
\]

In particular, if $a > b$, then
\[
  a > \mathcal{M}(a,b;0,\zeta) > b.
\]
By symmetry, if $a < b$, then
\[
  a < \mathcal{M}(a,b;0,\zeta) < b.
\]
\end{theorem}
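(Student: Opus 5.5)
The plan is to substitute $\pi=0$ directly into Definition~\ref{def:mediative-operator}. Then $w_1 = 1-\zeta/2$ and $w_2=\zeta/2$, which gives the displayed formula at once. Since $0<\zeta<1$, we have $w_2\in(0,1/2)$ and $w_1\in(1/2,1)$. In particular both weights are strictly positive and sum to $1$, so Axiom~(M1) holds and Theorem~\ref{thm:convexity} already yields the non-strict bounds $\min(a,b)\le \mathcal{M}(a,b;0,\zeta)\le \max(a,b)$.

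To obtain the strict inequalities, I will rewrite the operator relative to each endpoint:
\[
  \mathcal{M}(a,b;0,\zeta) - b = \Bigl(1-\tfrac{\zeta}{2}\Bigr)(a-b),
  \qquad
  a - \mathcal{M}(a,b;0,\zeta) = \tfrac{\zeta}{2}\,(a-b).
\]
When $a>b$, both right-hand sides are products of a strictly positive weight and the strictly positive quantity $a-b$. Hence $a>\mathcal{M}(a,b;0,\zeta)>b$.

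For the case $a<b$, I will use the same two identities. Now $a-b<0$ and the weights are still positive, so both inequalities reverse, giving $a<\mathcal{M}(a,b;0,\zeta)<b$. Alternatively, one can note that the map is affine in $(a,b)$ and the argument is symmetric under swapping the roles of the two inequalities.

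There is no real obstacle here. The only point needing care is that strictness comes from the hypothesis $0<\zeta<1$, which makes both weights nonzero. In fact only $0<\zeta<2$ is needed. At $\zeta=0$ the operator collapses to $a$ (Theorem~\ref{thm:reductions}), and the strict lower inequality against $a$ fails.
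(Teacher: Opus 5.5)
Your proof is correct and follows essentially the same route as the paper: substitute $\pi=0$ to obtain the weights $w_1=1-\zeta/2$ and $w_2=\zeta/2$, both strictly in $(0,1)$, and conclude that this strict convex combination lies strictly between $a$ and $b$. Your identities $\mathcal{M}(a,b;0,\zeta)-b=(1-\tfrac{\zeta}{2})(a-b)$ and $a-\mathcal{M}(a,b;0,\zeta)=\tfrac{\zeta}{2}(a-b)$ spell out the strictness step that the paper only asserts, and your remark that $0<\zeta<2$ already suffices is also correct.
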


\begin{proof}[Proof sketch]
For $\pi=0$, the weights become $w_1 = 1-\zeta/2$ and $w_2 = \zeta/2$, both strictly between $0$ and $1$ when $0 < \zeta < 1$. Hence
\[
  \mathcal{M}(a,b;0,\zeta) = w_1 a + w_2 b
\]
is a strict convex combination of $a$ and $b$. If $a > b$, any strict convex combination $w_1 a + w_2 b$ with $w_1,w_2 \in (0,1)$ lies strictly between $a$ and $b$, yielding $a > \mathcal{M}(a,b;0,\zeta) > b$. The case $a < b$ is analogous.
\end{proof}

As ensured by Axiom~(M1), $\mathcal{M}(a,b;\pi,\zeta)$ is a convex aggregation and cannot extrapolate beyond the interval spanned by $a$ and $b$. In the instantiated semantics where $\pi=\pi(\mu,\nu)$ and $\zeta=\zeta(\mu,\nu)$ are obtained by a positive-part construction, these quantities are nonnegative by definition.

\subsection{Mediative truth values and algebraic semantics}
\label{sec:med-truth-values}

We now introduce an algebraic semantics for type-1 MFL in terms of pairs of truth and falsity degrees. These pairs constitute mediative truth values and provide the basis for defining hesitation and contradiction in a uniform way, consistent with the agreement/non-agreement interpretation of MFL.

Let $V = [0,1]\times[0,1]$. A mediative truth value is any pair $(\mu,\nu)\in V$, where $\mu$ is interpreted as a degree of truth (or agreement) and $\nu$ as a degree of falsity (or non-agreement). From each pair, we derive two secondary quantities, hesitation and contradiction.

\begin{definition}[Hesitation and contradiction]
\label{def:hes-contr}
For $(\mu,\nu) \in V$ we define
\[
  \pi(\mu,\nu) = \max(0, 1 - \mu - \nu), \qquad
  \zeta(\mu,\nu) = \max(0, \mu + \nu - 1).
\]
Thus $\pi(\mu,\nu) > 0$ precisely when $\mu + \nu < 1$ (incomplete information), whereas $\zeta(\mu,\nu) > 0$ precisely when $\mu + \nu > 1$ (overdetermined, possibly contradictory information).
\end{definition}

Note that $\pi(\mu,\nu)$ and $\zeta(\mu,\nu)$ cannot be simultaneously positive. This cleanly separates hesitation from contradiction and will be useful when interpreting the mediative operator in terms of agreement and non-agreement channels.

To endow $V$ with logical operations, we fix a left-continuous $t$-norm $T$ and its dual $t$-conorm $S$, and define conjunction, disjunction, and negation coordinatewise on mediative truth values.

\begin{definition}[Conjunction, disjunction and negation on $V$]
\label{def:V-ops}
Fix a left-continuous $t$-norm $T$ and its dual $t$-conorm $S$. For
$(\mu_1,\nu_1),(\mu_2,\nu_2)\in V=[0,1]^2$, define
\[
(\mu_1,\nu_1)\wedge(\mu_2,\nu_2):=\bigl(T(\mu_1,\mu_2),\,S(\nu_1,\nu_2)\bigr),
\]
\[
(\mu_1,\nu_1)\vee(\mu_2,\nu_2):=\bigl(S(\mu_1,\mu_2),\,T(\nu_1,\nu_2)\bigr),
\]
and
\[
\neg(\mu,\nu):=(\nu,\mu).
\]
\end{definition}

Conjunction decreases the truth coordinate and increases the falsity coordinate, whereas disjunction increases truth and decreases falsity. Negation swaps the truth and falsity coordinates, in accordance with the intuition that the falsity of $p$ corresponds to the truth of $\neg p$. In this way, $(V,\wedge,\vee,\neg)$ forms a bilattice-like structure over pairs of truth and falsity degrees, on top of which the mediative evaluation is defined.

\begin{definition}[Mediative evaluation]
\label{def:mediative-evaluation}
Let $(\mu,\nu)\in V$. The mediative evaluation of $(\mu,\nu)$ is the scalar
\[
  M(\mu,\nu)
  := \mathcal{M}\!\bigl(a(\mu,\nu),\, b(\mu,\nu);\, \pi(\mu,\nu),\, \zeta(\mu,\nu)\bigr),
\]
where, in the basic case, we set $a(\mu,\nu)=\mu$ and $b(\mu,\nu)=1-\nu$.
\end{definition}

Thus, $M(\mu,\nu)$ is the mediative score associated with a proposition whose degrees of agreement and non-agreement are $\mu$ and $\nu$, respectively. It combines the agreement channel $a(\mu,\nu)=\mu$ and the lack-of-disagreement channel $b(\mu,\nu)=1-\nu$, with mixing weights given by the hesitation $\pi(\mu,\nu)$ and the contradiction $\zeta(\mu,\nu)$ extracted from $(\mu,\nu)$. The choices $a(\mu,\nu)=\mu$ and $b(\mu,\nu)=1-\nu$ reflect the idea that the mediative score should increase with agreement and and decrease with disagreement about $p$.

\section{Propositional Mediative Fuzzy Logic (MFL-T1)}
\label{sec:prop-MFL}
We define a propositional logic, MFL-T1, whose semantics is based on mediative truth values.

As usual, we use the definable constants $\top := (p \to p)$ and $\bot := \neg \top$, for an arbitrary fixed atomic proposition $p$.

\begin{definition}[Language]
\label{def:language}
The language of MFL-T1 extends that of the chosen fuzzy base logic (BL or {\L}ukasiewicz logic): it includes the connectives $\wedge,\vee,\neg,\to$, together with a unary mediative connective $\Med$. We write $\Med\,\varphi$ for its application to a formula $\varphi$.
\end{definition}

\begin{definition}[Implication on $V$]
\label{def:implication-on-V}
Fix a left-continuous $t$-norm $T$ on $[0,1]$ and let $\Rightarrow_T$ denote its residuum.
For $(\mu_1,\nu_1),(\mu_2,\nu_2)\in V=[0,1]^2$, define
\[
  (\mu_1,\nu_1)\to(\mu_2,\nu_2)
  :=
  \bigl(\mu_1 \Rightarrow_T \mu_2,\; \nu_2 \Rightarrow_T \nu_1\bigr).
\]
\end{definition}

\begin{definition}[Valuations]
\label{def:valuations}
A mediative valuation on the language of MFL-T1 is a function $v$ that assigns to each formula $\varphi$ a mediative truth value $v(\varphi)\in V$ and satisfies, for all formulas $\varphi,\psi$:
\begin{itemize}
  \item[(V1)] For each atomic proposition $p$, the value $v(p)$ is arbitrary in $V$.
  \item[(V2)] $v(\varphi\wedge\psi)=v(\varphi)\wedge v(\psi)$.
  \item[(V3)] $v(\varphi\vee\psi)=v(\varphi)\vee v(\psi)$.
  \item[(V4)] $v(\neg\varphi)=\neg v(\varphi)$.
  \item[(V5)] $v(\varphi\to\psi)=v(\varphi)\to v(\psi)$, where $\to$ on $V$ is as in Definition~\ref{def:implication-on-V}.
  \item[(V6)] $v(\Med(\varphi))=\bigl(M(v(\varphi)),\,1-M(v(\varphi))\bigr)$.
\end{itemize}
\end{definition}

\subsection{Axiomatic system}
\label{sec:axiomatic-system}
We build MFL-T1 on top of a fuzzy base logic such as H\'ajek's Basic Logic (BL) or {\L}ukasiewicz logic~\cite{Hajek1998}, which provides the axioms and rules for the connectives $\wedge,\vee,\to,\neg$. We then add axiom schemata governing the mediative connective.

\subsubsection{Axiom schemata for $\Med$}
We use the abbreviation $\varphi \leftrightarrow \psi := (\varphi \to \psi)\wedge(\psi \to \varphi)$.

\begin{enumerate}[label=\textup{(Med\arabic*)}, leftmargin=*, align=left, labelsep=.6em, itemsep=.4ex, topsep=.6ex]
  \item $(\varphi \to \psi) \to (\Med(\varphi) \to \Med(\psi)).$
  \item $\Med(\top) \leftrightarrow \top \ \ \text{and}\ \ \Med(\bot) \leftrightarrow \bot.$
  \item $(\varphi \leftrightarrow \psi) \to (\Med(\varphi) \leftrightarrow \Med(\psi)).$
\end{enumerate}

\subsection{Metatheoretical properties of MFL-T1}
\label{sec:metatheory-MFL1}
We record several basic metatheoretical properties of MFL-T1. The proofs rely on standard techniques from fuzzy logic and algebraic logic and are omitted. For the notation used below, see Appendix~A and Table~\ref{tab:mfl-symbols-extended}.

\paragraph{Soundness}
MFL-T1 is sound with respect to the mediative semantics: for every set $\Gamma$ of formulas and every formula $\varphi$, if $\Gamma \vdash_m \varphi$, then $\Gamma \models_m \varphi$. Equivalently, whenever $\Gamma \vdash_m \varphi$ and $v$ is a mediative valuation such that $M(v(\psi)) = 1$ for all $\psi \in \Gamma$, we also have $M(v(\varphi)) = 1$. In particular, if $\vdash_m \varphi$, then $\models_m \varphi$, i.e., $M(v(\varphi)) = 1$ for every mediative valuation $v$.

\paragraph{Paraconsistency}
MFL-T1 is paraconsistent: there exist formulas $\varphi$ and mediative valuations $v$ such that both $M(v(\varphi))$ and $M(v(\neg \varphi))$ can take high values simultaneously, while the explosion principle
\[
  (\varphi \wedge \neg \varphi) \to \psi
\]
is not derivable in MFL-T1 for arbitrary $\psi$.

\paragraph{Reduction to the base fuzzy logic}
Let $\varphi$ be a formula that does not contain the mediative connective $\Med$. Then $\varphi$ is derivable in MFL-T1 if and only if it is derivable in the underlying fuzzy base logic (BL or {\L}ukasiewicz logic). Thus, MFL-T1 is a conservative extension of the chosen base logic.

\paragraph{Reduction to intuitionistic fuzzy and type-1 cases}
If mediative valuations are restricted so that $\mu + \nu \le 1$ for all atomic propositions, the mediative evaluation reduces to an intuitionistic-style combination controlled only by hesitation. In the further special case where $\nu = 1 - \mu$ for all atoms, the mediative evaluation coincides with the underlying type-1 fuzzy evaluation, and MFL-T1 reduces to the base fuzzy logic at the semantic level.

\section{Type-2 Mediative Fuzzy Logic (MFL-T2)}
\label{sec:MFL2}

This section develops a semantic treatment of Type-2 Mediative Fuzzy Logic (MFL-T2) along the lines proposed in~\cite{CastilloMelin2023MFLControl}. The key point is that the primary degrees of truth and falsity are themselves uncertain; we model this second-order variability---due to noise, calibration drift, or changing environmental conditions---by interval type-2 fuzzy sets. For simplicity, we restrict attention to the interval case, which already captures the main mechanisms we need.

\begin{definition}[Footprint of uncertainty (FOU)~\cite{Mendel2001}]
\label{def:FOU}
Let $\tilde{A}$ be an interval type-2 fuzzy set on $[0,1]$ with lower and upper membership
functions $A^L, A^U : [0,1] \to [0,1]$. The footprint of uncertainty (FOU) of $\tilde{A}$ is
the set 
\[
  \mathrm{FOU}(\tilde{A})
    = \bigl\{\, (x,u) \in [0,1] \times [0,1] : A^L(x) \le u \le A^U(x) \,\bigr\}.
\]
In the present setting, $\mathrm{FOU}(\tilde{A})$ collects all admissible pairs $(x,u)$
where $x\in[0,1]$ is a candidate primary degree and $u\in[0,1]$ is an admissible membership
level consistent with the lower and upper membership functions.
\end{definition}

At the type-1 level, each proposition $p$ is assigned a mediative truth value $(\mu_p,\nu_p)$.
At the type-2 level, these scalar degrees are replaced by interval type-2 fuzzy sets, so that
the truth value of $p$ becomes a pair $(\tilde{\mu}_p,\tilde{\nu}_p)$, equivalently the pair of
footprints of uncertainty $\mathrm{FOU}(\tilde{\mu}_p)$ and $\mathrm{FOU}(\tilde{\nu}_p)$.

\begin{definition}[Type-2 mediative truth values]
\label{def:type2-mediative-values}
A type-2 mediative truth value for an atomic proposition $p$ is a pair
\[
  v^{(2)}(p) = (\tilde{\mu}_p,\tilde{\nu}_p),
\]
where $\tilde{\mu}_p$ and $\tilde{\nu}_p$ are interval type-2 fuzzy sets on $[0,1]$ with lower and upper membership functions $\mu_p^L,\mu_p^U$ and $\nu_p^L,\nu_p^U$, respectively. Here, the superscripts $L,U$ refer to lower/upper membership functions, whereas the
underline/overline notation $\underline{\mu}_p,\overline{\mu}_p$ and $\underline{\nu}_p,\overline{\nu}_p$
is reserved for the projected scalar bounds induced by these memberships (Definition~\ref{def:proj-bounds}).

Equivalently, $v^{(2)}(p)$ can be represented by the pair of footprints of uncertainty
$\mathrm{FOU}(\tilde{\mu}_p)$ and $\mathrm{FOU}(\tilde{\nu}_p)$.
By ``atomic proposition'' we mean a propositional variable (a formula with no connectives);
this valuation is extended inductively to compound formulas by the semantic clauses for the
connectives.
\end{definition}

\begin{definition}[Projected interval bounds]
\label{def:proj-bounds}
Let $\tilde{A}$ be an interval type-2 fuzzy set on $[0,1]$ with lower and upper membership
functions $A^L$ and $A^U$. Assume $A^U\not\equiv 0$. We define the \emph{outer} (conservative)
projection of $\tilde{A}$ by projecting the support of $A^U$:
\[
  \mathrm{Proj}^{U}(\tilde{A})=[\underline{a}^{U},\overline{a}^{U}],\qquad
  \underline{a}^{U}=\inf\{x\in[0,1]: A^U(x)>0\},\quad
  \overline{a}^{U}=\sup\{x\in[0,1]: A^U(x)>0\}.
\]
If $A^L\not\equiv 0$, we also define the \emph{inner} (guaranteed) projection by projecting the
support of $A^L$:
\[
  \mathrm{Proj}^{L}(\tilde{A})=[\underline{a}^{L},\overline{a}^{L}],\qquad
  \underline{a}^{L}=\inf\{x\in[0,1]: A^L(x)>0\},\quad
  \overline{a}^{L}=\sup\{x\in[0,1]: A^L(x)>0\}.
\]
More generally, one may replace these support-based projections by $\alpha$-cut projections,
using $\{x: A^U(x)\ge \alpha\}$ and $\{x: A^L(x)\ge \alpha\}$ for a prescribed $\alpha\in(0,1]$.
Once an interval has been selected (outer, inner, or $\alpha$-cut based), subsequent steps treat
it as the admissible range of the primary degree and propagate it through the same interval
computations used later (for $\pi$, $\zeta$, and the mediative evaluation).
\end{definition}

\begin{definition}[Interval bounds for type-2 connectives]
\label{def:type2-connectives}
Let $\varphi,\psi$ be formulas, and assume each is associated with projected scalar bounds
\[
  \mu_\varphi \in [\underline{\mu}_\varphi,\overline{\mu}_\varphi],\quad
  \nu_\varphi \in [\underline{\nu}_\varphi,\overline{\nu}_\varphi],
  \qquad
  \mu_\psi \in [\underline{\mu}_\psi,\overline{\mu}_\psi],\quad
  \nu_\psi \in [\underline{\nu}_\psi,\overline{\nu}_\psi],
\]
obtained, for instance, from Definition~\ref{def:proj-bounds}. Let $T$ be the $t$-norm and $S$ the
associated $t$-conorm used in Definition~\ref{def:V-ops}. Let $\Rightarrow_T$ denote the residuum of $T$
as in Definition~\ref{def:implication-on-V}.

Since $T$ and $S$ are nondecreasing in each argument, conservative bounds for $\wedge$ and $\vee$ are obtained by endpoint evaluation. For $\neg$, we use the bilattice negation (swap of truth and falsity bounds). For $\to$, recall that the residuum $\Rightarrow_T$ is antitone in its first argument and monotone in its second; conservative bounds again follow from endpoint evaluation using the worst-case combinations of premise and conclusion bounds.

\begin{equation}
\label{eq:type2-bounds-and-or-not}
\begin{aligned}
\underline{\mu}_{\varphi\wedge\psi} &= T(\underline{\mu}_\varphi,\underline{\mu}_\psi), &
\overline{\mu}_{\varphi\wedge\psi} &= T(\overline{\mu}_\varphi,\overline{\mu}_\psi),\\
\underline{\nu}_{\varphi\wedge\psi} &= S(\underline{\nu}_\varphi,\underline{\nu}_\psi), &
\overline{\nu}_{\varphi\wedge\psi} &= S(\overline{\nu}_\varphi,\overline{\nu}_\psi),\\[3pt]
\underline{\mu}_{\varphi\vee\psi} &= S(\underline{\mu}_\varphi,\underline{\mu}_\psi), &
\overline{\mu}_{\varphi\vee\psi} &= S(\overline{\mu}_\varphi,\overline{\mu}_\psi),\\
\underline{\nu}_{\varphi\vee\psi} &= T(\underline{\nu}_\varphi,\underline{\nu}_\psi), &
\overline{\nu}_{\varphi\vee\psi} &= T(\overline{\nu}_\varphi,\overline{\nu}_\psi),\\[3pt]
\underline{\mu}_{\neg\varphi} &= \underline{\nu}_\varphi,\quad
\overline{\mu}_{\neg\varphi} = \overline{\nu}_\varphi, &
\underline{\nu}_{\neg\varphi} &= \underline{\mu}_\varphi,\quad
\overline{\nu}_{\neg\varphi} = \overline{\mu}_\varphi.
\end{aligned}
\end{equation}

For implication, we use the residuum $\Rightarrow_T$ of the chosen $t$-norm $T$. Since $\Rightarrow_T$ is antitone in its antecedent and monotone in its consequent, the lower bound is attained at $(\text{first}=\overline{\phantom{\mu}},\,\text{second}=\underline{\phantom{\mu}})$ and the upper bound at $(\text{first}=\underline{\phantom{\mu}},\,\text{second}=\overline{\phantom{\mu}})$:
\begin{equation}
\label{eq:type2-imp-bounds}
\begin{aligned}
\underline{\mu}_{\varphi\to\psi} &= \overline{\mu}_\varphi \Rightarrow_T \underline{\mu}_\psi, &
\overline{\mu}_{\varphi\to\psi} &= \underline{\mu}_\varphi \Rightarrow_T \overline{\mu}_\psi,\\
\underline{\nu}_{\varphi\to\psi} &= \overline{\nu}_\psi \Rightarrow_T \underline{\nu}_\varphi, &
\overline{\nu}_{\varphi\to\psi} &= \underline{\nu}_\psi \Rightarrow_T \overline{\nu}_\varphi.
\end{aligned}
\end{equation}

In what follows, we propagate interval bounds through compound formulas using the endpoint rules
in~\eqref{eq:type2-bounds-and-or-not} and~\eqref{eq:type2-imp-bounds}.
\end{definition}

\begin{definition}[Type-2 hesitation and contradiction (interval bounds)]
\label{def:type2-hc}
Let $\mu\in[\underline{\mu},\overline{\mu}]$ and $\nu\in[\underline{\nu},\overline{\nu}]$ be
the projected interval bounds induced by $\tilde{\mu}$ and $\tilde{\nu}$
(Definition~\ref{def:proj-bounds}). Define
\[
  H(\mu,\nu):=\max\{0,\,1-\mu-\nu\},\qquad
  C(\mu,\nu):=\max\{0,\,\mu+\nu-1\}.
\]
Since $1-\mu-\nu$ is nonincreasing in each argument and $\mu+\nu-1$ is nondecreasing in each
argument, and since $\max\{0,\cdot\}$ preserves monotonicity, conservative interval envelopes
are obtained by endpoint evaluation:
\[
  H_L=\max\{0,\,1-\overline{\mu}-\overline{\nu}\},\qquad
  H_U=\max\{0,\,1-\underline{\mu}-\underline{\nu}\},
\]
\[
  C_L=\max\{0,\,\underline{\mu}+\underline{\nu}-1\},\qquad
  C_U=\max\{0,\,\overline{\mu}+\overline{\nu}-1\}.
\]
In the type-1 semantics, $H(\mu,\nu)$ and $C(\mu,\nu)$ coincide with the hesitation $\pi$
and contradiction $\zeta$, respectively.
\end{definition}

\begin{definition}[Type-2 mediative evaluation: type-reduced and envelope modes]
\label{def:type2-med-eval}
Let $v^{(2)}(p)=(\tilde{\mu}_p,\tilde{\nu}_p)$ be an interval type-2 mediative assignment for an atomic proposition $p$. We consider two complementary ways of assigning a mediative degree.

\emph{(i) Type-reduced (crisp) mode.} First type-reduce $\tilde{\mu}_p$ and $\tilde{\nu}_p$ to a crisp pair $(\bar{\mu}_p,\bar{\nu}_p)$ (for instance, by centroid type-reduction via the Karnik--Mendel procedures), and then set
\[
  \bar{M}_p := M(\bar{\mu}_p,\bar{\nu}_p).
\]

\emph{(ii) Envelope (interval) mode.} Alternatively, extract projected bounds
$\mu_p\in[\underline{\mu}_p,\overline{\mu}_p]$ and $\nu_p\in[\underline{\nu}_p,\overline{\nu}_p]$
from the footprints of $\tilde{\mu}_p$ and $\tilde{\nu}_p$ (Definition~\ref{def:proj-bounds}), and define the envelope of admissible mediative scores by
\[
  [M_L(p), M_U(p)] := \Bigl[
\min_{\mu\in[\underline{\mu}_p,\overline{\mu}_p] \atop \nu\in[\underline{\nu}_p,\overline{\nu}_p]} M(\mu,\nu),\;
\max_{\mu\in[\underline{\mu}_p,\overline{\mu}_p] \atop \nu\in[\underline{\nu}_p,\overline{\nu}_p]} M(\mu,\nu)
\Bigr].
\]
Because $M(\mu,\nu)$ is continuous and piecewise affine (with regime boundary $\mu+\nu=1$), its extrema over a rectangle are attained on the boundary. In practice, it is enough to evaluate $M$ at the four corners of $[\underline{\mu},\overline{\mu}]\times[\underline{\nu},\overline{\nu}]$ and, whenever the rectangle intersects the line $\mu+\nu=1$, also at the intersection points with the rectangle edges.
\end{definition}

\paragraph{Extension from atoms to formulas}
In mode (i), define a type-1 valuation $\bar v(p)=(\bar\mu_p,\bar\nu_p)$ and extend it inductively
to all formulas using the same connective clauses as in MFL-T1. In mode (ii), extend the projected
bounds inductively to all formulas using Definition~\ref{def:type2-connectives}, and compute the
corresponding envelope $[M_L(\varphi),M_U(\varphi)]$ by the same optimization as above.

\begin{proposition}[Reduction to the type-1 case]
\label{prop:reduction-type1}
If for every atomic proposition $p$ the interval type-2 sets $\tilde{\mu}_p$ and $\tilde{\nu}_p$
degenerate to crisp degrees $\mu_p$ and $\nu_p$, then in both modes (i) and (ii) the induced
evaluation of formulas in MFL-T2 coincides with the type-1 semantics of MFL-T1.
\end{proposition}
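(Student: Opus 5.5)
The plan is to make ``degenerate to crisp degrees'' precise and then run a structural induction on formulas, treating the two modes separately. I would read degeneration as follows: for each atom $p$, the lower and upper memberships of $\tilde{\mu}_p$ coincide and are the indicator of the singleton $\{\mu_p\}$ (similarly for $\tilde{\nu}_p$ and $\nu_p$). Under this reading, Definition~\ref{def:proj-bounds} yields projected intervals $[\underline{\mu}_p,\overline{\mu}_p]=[\mu_p,\mu_p]$ and $[\underline{\nu}_p,\overline{\nu}_p]=[\nu_p,\nu_p]$, whether the outer, inner or $\alpha$-cut projection is used. The only property of the type-reduction used in mode (i) is that it returns the point itself on a singleton. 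The centroid of a set concentrated at one point is that point, and the Karnik--Mendel interval collapses to it. This gives $(\bar\mu_p,\bar\nu_p)=(\mu_p,\nu_p)$.

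For mode (i) the argument is then immediate. The type-1 valuation $\bar v$ agrees with the MFL-T1 valuation $v$ on atoms. Both are extended by the same clauses (V2)--(V6) of Definition~\ref{def:valuations}, so a one-line induction gives $\bar v(\varphi)=v(\varphi)$ for every formula. Consequently the mediative scores coincide, $\bar M_\varphi=M(v(\varphi))$.

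For mode (ii) I would prove by induction on $\varphi$ that every propagated interval is degenerate. The claim is $\underline{\mu}_\varphi=\overline{\mu}_\varphi=\mu_\varphi$ and $\underline{\nu}_\varphi=\overline{\nu}_\varphi=\nu_\varphi$, where $v(\varphi)=(\mu_\varphi,\nu_\varphi)$ is the MFL-T1 value. The inductive steps are checked one connective at a time against~\eqref{eq:type2-bounds-and-or-not} and~\eqref{eq:type2-imp-bounds}:
\begin{itemize}
\item for $\wedge$ and $\vee$, equal endpoints give $T(\mu_\varphi,\mu_\psi)$ and $S(\nu_\varphi,\nu_\psi)$ (and dually), which match Definition~\ref{def:V-ops};
\item for $\neg$, the bounds are swapped, matching $\neg(\mu,\nu)=(\nu,\mu)$;
\item for $\to$, both bounds reduce to $\mu_\varphi\Rightarrow_T\mu_\psi$ and $\nu_\psi\Rightarrow_T\nu_\varphi$, matching Definition~\ref{def:implication-on-V}.
\end{itemize}
With degenerate rectangles the envelope optimisation in Definition~\ref{def:type2-med-eval} ranges over a single point. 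Hence $M_L(\varphi)=M_U(\varphi)=M(\mu_\varphi,\nu_\varphi)$, the type-1 mediative score. The same reasoning covers the derived hesitation and contradiction envelopes of Definition~\ref{def:type2-hc}: we get $H_L=H_U=\pi$ and $C_L=C_U=\zeta$.

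The main obstacle is the $\Med$ connective, because Definition~\ref{def:type2-connectives} gives no interval rule for it. I would handle it by adopting the natural clause that mirrors (V6). Given bounds for $\varphi$, set the truth interval of $\Med\varphi$ to $[M_L(\varphi),M_U(\varphi)]$ and the falsity interval to $[1-M_U(\varphi),1-M_L(\varphi)]$. Under the induction hypothesis this interval is the single point $M(v(\varphi))$, which gives exactly $(M(v(\varphi)),1-M(v(\varphi)))$. A secondary subtlety is the interpretation of ``degenerate''. If it instead meant that $\tilde\mu_p$ has a fuzzy but non-singleton support, the outer projection would not collapse. I would therefore state the singleton reading explicitly as the hypothesis under which the proposition holds.
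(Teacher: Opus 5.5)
Your proposal is correct and takes essentially the paper's route. The paper also argues by structural induction: in the base case the type-reduced pair and the projected intervals collapse to $(\mu_p,\nu_p)$, and in the inductive step the mode-(i) clauses are literally those of MFL-T1 while the mode-(ii) endpoint rules reduce to them on singleton intervals. Your explicit interval clause for $\Med$ and your singleton reading of ``degenerate'' make precise two points the paper's short proof leaves implicit (Definition~\ref{def:type2-connectives} gives no rule for $\Med$), but they refine the argument rather than change it.
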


\begin{proof}
By structural induction on formulas. In the base case, type-reduction returns
$(\bar\mu_p,\bar\nu_p)=(\mu_p,\nu_p)$ and the projected intervals collapse to singletons. The
inductive step follows because the connective clauses are the same as in MFL-T1 (mode (i)) or are
their monotone interval extensions (mode (ii)), which also collapse to singletons. Hence, both
modes coincide with the type-1 evaluation.
\end{proof}

\paragraph{Conservative decision rules from envelopes}
The envelope mode supports conservative decision policies: for a decision threshold $\tau$,
one may require $M_L(\varphi)\ge \tau$ to assert $\varphi$ decisively and use $M_U(\varphi)\ge \tau$
to flag $\varphi$ as potentially true and request additional evidence.

\paragraph{Axioms and proof system}
The axiomatic system introduced in Section~\ref{sec:prop-MFL} is kept unchanged. In MFL-T2, we
only enrich the semantic domain by allowing interval type-2 uncertainty in the truth and falsity
degrees assigned to atomic propositions; syntactic derivability is defined exactly as in 
type-1 logic.

\section{Type-3 Granular Mediative Fuzzy Logic (MFL-T3)}
\label{sec:MFL3}

Type-3 Granular Mediative Fuzzy Logic (MFL-T3), originally proposed in~\cite{CastilloMelin2023MFLControl}, extends mediative reasoning to higher-order uncertainty by combining it with type-3 fuzzy systems. The goal is to represent knowledge from multiple experts and to capture variability arising from heterogeneous evidence streams (e.g., distributed sensing). Here we provide a granular, multi-level formalization in which mediative truth is organized into families indexed by \emph{granules} (e.g., information sources, evidence modalities, time slices, sensor channels, or expert groups)~\cite{PEDRYCZ201493,QIN2023101833}.

In MFL-T3, a truth value is not treated as a single mediative pair (type-1) nor as a single footprint of uncertainty (type-2), but as a structured family of local evaluations indexed by granules. This representation captures heterogeneity across sources and contexts and supports principled aggregation of potentially conflicting evidence within the mediative semantics.

Let $G$ be a finite nonempty set of granules (e.g., expert, sensor, and time triples). For each granule $g \in G$ we consider a \emph{local} mediative valuation $v_g$ assigning to each atomic proposition $p$ either a type-1 or a type-2 mediative truth value:
\begin{itemize}
  \item in the type-1 case,
  \[
    v_g(p) = (\mu_{p,g}, \nu_{p,g}) \in [0,1]^2;
  \]
  \item in the type-2 case,
  \[
    v_g(p) = (\tilde{\mu}_{p,g}, \tilde{\nu}_{p,g}),
  \]
  where $\tilde{\mu}_{p,g}$ and $\tilde{\nu}_{p,g}$ are interval type-2 fuzzy sets on $[0,1]$ with
  footprints of uncertainty $\mathrm{FOU}(\tilde{\mu}_{p,g})$ and $\mathrm{FOU}(\tilde{\nu}_{p,g})$,
  as in Section~\ref{sec:MFL2}.
\end{itemize}

A type-3 mediative truth assignment for $p$ is the indexed family
\[
  v^{(3)}(p) := (v_g(p))_{g \in G}.
\]
More generally, a type-3 valuation for formulas is
\[
  v^{(3)}(\varphi) := (v_g(\varphi))_{g \in G},
\]
where each $v_g$ extends to compound formulas by the same compositional clauses as in MFL-T1 and
MFL-T2. In particular, all connectives (conjunction, disjunction, negation, implication, and the
mediative connective) are evaluated \emph{locally} at each granule, yielding a family of local
truth values prior to any cross-granule aggregation.

Here ``type-3'' is used in the sense of a higher-level granular family of mediative evaluations,
i.e., a granule-indexed valuation, without committing to any particular formalization of
type-3 fuzzy sets beyond what is needed for the present semantics.

\begin{definition}[Local and granular mediative evaluations]
\label{def:MFL3-evals}
Fix a finite set of granules $G$. For each granule $g \in G$ and formula $\varphi$, define a
\emph{local} scalar mediative degree $M_g(\varphi)$ as follows:
\begin{itemize}
  \item \emph{Type-1 local evaluation.} If $v_g(\varphi)=(\mu_{\varphi,g},\nu_{\varphi,g})$, set
  \[
    M_g(\varphi):=M(\mu_{\varphi,g},\nu_{\varphi,g}).
  \]

  \item \emph{Type-2 local evaluation (via type-reduction).} If
  $v_g(\varphi)=(\tilde{\mu}_{\varphi,g},\tilde{\nu}_{\varphi,g})$, obtain type-reduced intervals
  \[
    \mathrm{TR}(\tilde{\mu}_{\varphi,g})=[\mu_{\varphi,g}^l,\mu_{\varphi,g}^r],\qquad
    \mathrm{TR}(\tilde{\nu}_{\varphi,g})=[\nu_{\varphi,g}^l,\nu_{\varphi,g}^r],
  \]
  via a centroid type-reduction method (e.g., Karnik--Mendel procedures
  \cite{KM2001,KM1999,Mendel2001}). Choose a crisp representative of each interval; for definiteness,
  we use the midpoint
  \[
    \bar{\mu}_{\varphi,g}:=\frac{\mu_{\varphi,g}^l+\mu_{\varphi,g}^r}{2},
    \qquad
    \bar{\nu}_{\varphi,g}:=\frac{\nu_{\varphi,g}^l+\nu_{\varphi,g}^r}{2},
  \]
  and set
  \[
    M_g(\varphi):=M(\bar{\mu}_{\varphi,g},\bar{\nu}_{\varphi,g}).
  \]
\end{itemize}
If one prefers to preserve uncertainty at the local level, one may instead propagate bounds and report an interval $M_g(\varphi)\in[M_{g,L}(\varphi),M_{g,U}(\varphi)]$, as in Definition~\ref{def:type2-med-eval}.

A \emph{granular aggregation operator} for $\varphi$ is a mapping
\[
  A_\varphi : [0,1]^G \to [0,1],
  \qquad
  (M_g(\varphi))_{g\in G} \longmapsto M_G(\varphi),
\]
and the resulting group-level (global) mediative degree is
\[
  M_G(\varphi) \;:=\; A_\varphi\bigl((M_g(\varphi))_{g\in G}\bigr).
\]
Typical choices for $A_\varphi$ include (i) weighted averages with weights reflecting the reliability
or relevance of each granule; (ii) ordered weighted averaging (OWA) operators; and (iii) hierarchical
combinations respecting predefined groupings (e.g., expert groups, sensor classes, or time windows).
\end{definition}

The granular setting becomes meaningful precisely when the family $(M_g(\varphi))_{g\in G}$ is
heterogeneous, since then $A_\varphi$ can encode domain policies (e.g., robustness to outliers or
priority of trusted sources) that are not captured by a single unindexed mediative evaluation.

\begin{theorem}[Consistency under homogeneous granules and reduction to lower types]
\label{thm:MFL3-homogeneous}
Assume that for each formula $\varphi$, the aggregation operator $A_\varphi$ is idempotent, i.e.,
\[
  A_\varphi\bigl((c)_{g\in G}\bigr)=c
  \quad\text{for every constant family } (c)_{g\in G}.
\]
If all granules assign the same local mediative degree $M_g(\varphi)=c$ to $\varphi$, then the
global mediative degree satisfies $M_G(\varphi)=c$.

Moreover, fix an atomic proposition $p$ and assume that the local truth values for $p$ are identical
across granules, i.e., $v_g(p)=v_{g'}(p)$ for all $g,g'\in G$. Then, for every formula $\varphi$ in
the sublanguage generated by $\{p\}$, the type-3 assignment $v^{(3)}(\varphi)$ reduces, up to this
common value, to a lower-type assignment: define $v^{(2)}(p):=v_g(p)$ (for any $g\in G$) when the
common value is type-2, and identify $v^{(1)}(p):=v_g(p)$ when the common value is type-1. In this
homogeneous case, MFL-T3 reduces to MFL-T2 (or further to MFL-T1).
\end{theorem}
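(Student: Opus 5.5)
The first claim needs nothing beyond the idempotency hypothesis. If $M_g(\varphi)=c$ for every $g\in G$, the family $(M_g(\varphi))_{g\in G}$ is the constant family $(c)_{g\in G}$. By Definition~\ref{def:MFL3-evals}, $M_G(\varphi)=A_\varphi\bigl((c)_{g\in G}\bigr)$, and idempotency gives $M_G(\varphi)=c$. I would state this first, since the second part uses it to close the argument.

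For the reduction part, I would argue by structural induction on formulas $\varphi$ in the sublanguage generated by $\{p\}$. The inductive claim is that $v_g(\varphi)=v_{g'}(\varphi)$ for all $g,g'\in G$, and that this common value equals the lower-type valuation $v^{(2)}(\varphi)$ (or $v^{(1)}(\varphi)$) generated from $v^{(2)}(p):=v_g(p)$ (or $v^{(1)}(p):=v_g(p)$).

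\textbf{Base case.} The constants $\top=(p\to p)$ and $\bot=\neg\top$ are themselves formulas over $p$, so I treat $p$ as the only atom. For $\varphi=p$ the claim is exactly the hypothesis.

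\textbf{Inductive step.} Each connective $\wedge,\vee,\neg,\to,\Med$ is evaluated locally at each granule. The clauses are the same functions in every granule: (V2)--(V6) in the type-1 case, and the endpoint rules of Definition~\ref{def:type2-connectives} or the type-reduced mode of Definition~\ref{def:type2-med-eval} in the type-2 case. A fixed function applied to identical inputs gives identical outputs, so $v_g(\varphi)=v_{g'}(\varphi)$. The value agrees with the lower-type valuation for the same reason, because the lower-type valuation is generated by those same clauses.

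It then follows that the local scalar degrees $M_g(\varphi)$ coincide for all $g$. In the type-1 case they are $M$ applied to a common pair. In the type-2 case they are obtained from the same type-reduced midpoints, or from the same envelope. Applying the first part, $M_G(\varphi)$ equals the common lower-type mediative degree. Hence the whole type-3 structure $v^{(3)}(\varphi)$ is the constant family of a type-2 value, so MFL-T3 reduces to MFL-T2. If the common value is type-1, Proposition~\ref{prop:reduction-type1}, or simply type-1 semantics directly, takes us to MFL-T1.

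The main obstacle is not computational but definitional. I must make sure the local clauses really are granule-independent functions, so that no granule-specific parameters such as different $t$-norms, different type-reduction choices, or weights enter the local evaluation. I would state this explicitly as part of the setup, namely that every $v_g$ uses the same $T$, $S$, $\Rightarrow_T$ and the same type-reduction/representative choice. The restriction to the sublanguage generated by $\{p\}$ is there so that no other atom, possibly heterogeneous across granules, enters the induction.
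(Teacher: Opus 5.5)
Your proposal is correct and follows the paper's proof: idempotency gives the first claim, and structural induction over formulas built from $p$ shows $v_g(\varphi)=v_{g'}(\varphi)$. Hence $M_g(\varphi)$ is constant in $g$, the first part yields $M_G(\varphi)=M_g(\varphi)$, and this common value is identified with the lower-type assignment. Your explicit requirement that all granules share the same $T$, $S$, $\Rightarrow_T$ and type-reduction choice states precisely what the paper leaves implicit in the phrase ``the same compositional clauses''.
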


\begin{proof}
If $M_g(\varphi)=c$ for all $g\in G$, then $(M_g(\varphi))_{g\in G}=(c)_{g\in G}$ and, by
idempotence, $M_G(\varphi)=A_\varphi((c)_{g\in G})=c$.

For the reduction claim, assume that $v_g(p)=v_{g'}(p)$ for all $g,g'\in G$. Since each local
valuation $v_g$ extends to compound formulas by the same compositional clauses as in MFL-T1/MFL-T2,
it follows by structural induction that $v_g(\varphi)=v_{g'}(\varphi)$ for every formula $\varphi$
built from the single atom $p$. Hence $M_g(\varphi)$ is constant in $g$, and the first part yields
$M_G(\varphi)=M_g(\varphi)$.

Identifying the common local value as a lower-type assignment (type-2 or type-1, depending on the
case), therefore, yields a semantics that is indistinguishable from the type-3 one in homogeneous
situations.
\end{proof}

MFL-T3 is particularly suitable in domains where evidence is both heterogeneous and dynamically evolving, such as longitudinal medical diagnosis involving multiple specialists, smart-grid monitoring with diverse sensing infrastructures, or ensembles of machine-learning models deployed alongside human oversight.

\paragraph{Axioms and proof system}
The axiomatic system introduced in Section~\ref{sec:prop-MFL} is kept unchanged. In MFL-T3 we
enrich only the semantic domain by indexing valuations over a granular set $G$ and by adding an
explicit granular aggregation stage; syntactic derivability is defined exactly as in the type-1
logic.

\section{Quantum Mediative Fuzzy Logic (QMFL)}
\label{sec:QMFL}
Quantum Mediative Fuzzy Logic (QMFL) extends mediative truth values to a quantum setting in which evidence is represented by quantum states and effects on a Hilbert space. Algebraically, the framework fits naturally within effect-algebraic semantics in quantum theory~\cite{FoulisBennett1994}. From a granular viewpoint, QMFL can also be seen as an instance of effect-based granular computing: quantum effects serve as granules, and their combinations implement mediative aggregation.

Let $\mathcal{H}$ be a finite-dimensional Hilbert space. The central semantic objects are quantum effects, following the effect-based granulation viewpoint~\cite{ross2025foundationsquantumgranularcomputing}.

A (quantum) \emph{effect} on $\mathcal{H}$ is a self-adjoint operator $E$ such that
\begin{equation}\label{eq:effect-def}
  0 \preceq E \preceq I,
\end{equation}
where $\preceq$ denotes the L\"owner (positive semidefinite) order. A \emph{quantum state} is a density operator $\rho$ on $\mathcal{H}$, i.e., $\rho \succeq 0$ and $\operatorname{Tr}(\rho)=1$. The Born expectation $\operatorname{Tr}(\rho E)$ is interpreted as a graded degree associated with the effect $E$.

\begin{definition}[Quantum mediative structure]
\label{def:qmfl-structure}
For each proposition $p$, consider two effects $E_p^{+}$ and $E_p^{-}$ on $\mathcal{H}$, representing a \emph{positive channel} (evidence in favour of $p$) and a \emph{negative channel} (evidence in favour of $\neg p$), respectively. For a quantum state $\rho$, define
\[
  \mu_p(\rho) := \operatorname{Tr}\!\bigl(\rho E_p^{+}\bigr),
  \qquad
  \nu_p(\rho) := \operatorname{Tr}\!\bigl(\rho E_p^{-}\bigr),
\]
and set the associated hesitation and contradiction degrees as
\[
  \pi_p(\rho) := \max\{0,\, 1 - \mu_p(\rho) - \nu_p(\rho)\},
  \qquad
  \zeta_p(\rho) := \max\{0,\, \mu_p(\rho) + \nu_p(\rho) - 1\}.
\]
The pair $(\mu_p(\rho),\nu_p(\rho))$ is the (type-1) mediative truth value of $p$ in the quantum state $\rho$.
\end{definition}

The effects $E_p^{+}$ and $E_p^{-}$ are not required to be complementary (e.g., $E_p^{-}\neq I-E_p^{+}$). They encode two independent channels of graded evidence; accordingly, semantic contradiction is admissible.

\begin{definition}[Quantum mediative effect and degree]
\label{def:qmfl-effect}
Define the mediative weights
\begin{equation}
\label{eq:qmfl-weights}
  w_{1,p}(\rho) := 1 - \pi_p(\rho) - \frac{\zeta_p(\rho)}{2},
  \qquad
  w_{2,p}(\rho) := \pi_p(\rho) + \frac{\zeta_p(\rho)}{2},
\end{equation}
where $w_{1,p}(\rho),w_{2,p}(\rho)\in[0,1]$ and $w_{1,p}(\rho)+w_{2,p}(\rho)=1$.

\medskip

The \emph{quantum mediative effect} associated with $p$ in the state $\rho$ is
\begin{equation}
\label{eq:qmfl-Mp}
  M_p(\rho) := w_{1,p}(\rho)\,E_p^{+} + w_{2,p}(\rho)\,\bigl(I - E_p^{-}\bigr).
\end{equation}

\medskip

The corresponding \emph{quantum mediative degree} of $p$ in the state $\rho$ is
\begin{equation}
\label{eq:qmfl-Mq}
  M_q(p,\rho) := \operatorname{Tr}\!\bigl(\rho\, M_p(\rho)\bigr).
\end{equation}
\end{definition}

The operator $M_p(\rho)$ depends on $\rho$ through the weights $w_{1,p}(\rho)$ and $w_{2,p}(\rho)$. Operationally, this can be read as an adaptive, evidence-conditioned effect constructed from the current informational state $\rho$; the scalar $M_q(p,\rho)$ in~\eqref{eq:qmfl-Mq} is the corresponding Born expectation and yields a graded mediative score in $[0,1]$.

\begin{theorem}[Effect-algebra compatibility]
\label{thm:qmfl-effect-compat}
For every proposition $p$ and every quantum state $\rho$, the operator $M_p(\rho)$ is an effect, i.e., it satisfies~\eqref{eq:effect-def} with $E=M_p(\rho)$.
\end{theorem}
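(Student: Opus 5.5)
The plan is to show that, for fixed $\rho$, $M_p(\rho)$ is a convex combination of two effects, and that the set of effects is convex in the Löwner order.

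First we check that the weights in~\eqref{eq:qmfl-weights} are admissible. Since $E_p^{\pm}$ are effects and $\rho$ is a density operator, $\mu_p(\rho)=\operatorname{Tr}(\rho E_p^{+})\in[0,1]$ and likewise $\nu_p(\rho)\in[0,1]$. The reason is that $\operatorname{Tr}(\rho E)=\operatorname{Tr}(\rho^{1/2}E\rho^{1/2})\ge 0$, and $\operatorname{Tr}(\rho(I-E))\ge 0$ gives $\operatorname{Tr}(\rho E)\le\operatorname{Tr}\rho=1$. As noted after Definition~\ref{def:hes-contr}, at most one of $\pi_p(\rho)$, $\zeta_p(\rho)$ is positive, so we split into cases.

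If $\zeta_p=0$, then $w_{2,p}=\pi_p=\max\{0,1-\mu_p-\nu_p\}\in[0,1]$.

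If $\pi_p=0$, then $w_{2,p}=\zeta_p/2$, and $\zeta_p=\mu_p+\nu_p-1\le 1$, so $w_{2,p}\in[0,1/2]$.

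In either case $w_{1,p}=1-w_{2,p}\in[0,1]$ and the two weights sum to $1$. This is exactly Axiom~(M1), so Theorem~\ref{thm:convexity}'s hypothesis holds at the scalar level. This case check is the only step needing care, because the weights depend on $\rho$ and must be verified nonnegative rather than assumed.

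Next we treat the operators. $E_p^{+}$ is an effect by assumption. $I-E_p^{-}$ is also an effect: it is self-adjoint, and $0\preceq E_p^{-}\preceq I$ gives $0\preceq I-E_p^{-}\preceq I$ after rearranging the Löwner inequalities. For fixed $\rho$ the weights are just real scalars $w_1,w_2\ge0$ with $w_1+w_2=1$, so $M_p(\rho)=w_1E_p^{+}+w_2(I-E_p^{-})$ is self-adjoint as a real linear combination of self-adjoint operators.

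Finally we obtain the effect bounds. Positivity follows from the fact that nonnegative multiples and sums of positive semidefinite operators are positive semidefinite, so $M_p(\rho)\succeq 0$. For the upper bound we compute
\[
I-M_p(\rho)=w_1\bigl(I-E_p^{+}\bigr)+w_2E_p^{-}\succeq 0,
\]
using $w_1+w_2=1$. This gives $M_p(\rho)\preceq I$, which is~\eqref{eq:effect-def}.

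As a byproduct, $M_q(p,\rho)=\operatorname{Tr}(\rho M_p(\rho))\in[0,1]$, by the same trace argument used in the first step.
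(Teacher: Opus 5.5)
Your proof is correct and follows the same route as the paper: $M_p(\rho)$ is a convex combination of the two effects $E_p^{+}$ and $I-E_p^{-}$, and is therefore an effect. You also prove that $w_{1,p}(\rho),w_{2,p}(\rho)\in[0,1]$, which the paper only asserts alongside~\eqref{eq:qmfl-weights}, and your identity $I-M_p(\rho)=w_1(I-E_p^{+})+w_2E_p^{-}$ makes explicit the convexity step that the paper leaves implicit.
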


\begin{proof}
Since $E_p^{+}$ and $E_p^{-}$ are effects, we have $0 \preceq E_p^{+} \preceq I$ and $0 \preceq E_p^{-} \preceq I$. Hence $0 \preceq I-E_p^{-} \preceq I$, so both $E_p^{+}$ and $I-E_p^{-}$ are effects. Moreover, $w_{1,p}(\rho),w_{2,p}(\rho)\in[0,1]$ and $w_{1,p}(\rho)+w_{2,p}(\rho)=1$, so $M_p(\rho)$ is a convex combination of two effects (cf.~\eqref{eq:qmfl-Mp}). Therefore $0 \preceq M_p(\rho) \preceq I$.
\end{proof}

\begin{theorem}[Consistency with the classical mediative evaluation]
\label{thm:qmfl-consistency}
For every proposition $p$ and every quantum state $\rho$,
\[
  M_q(p,\rho) \;=\; M\!\bigl(\mu_p(\rho),\nu_p(\rho)\bigr),
\]
where $M(\mu,\nu)$ denotes the type-1 mediative evaluation defined by the mediative operator.
\end{theorem}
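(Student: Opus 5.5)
The plan is a direct computation that uses linearity of the trace together with the fact that the weights $w_{1,p}(\rho),w_{2,p}(\rho)$ are scalars. Once $\rho$ is fixed they are real numbers and can be pulled out of the trace. There is no interaction between the $\rho$-dependence of the weights and the operator structure, because the state enters the weights only through the scalars $\mu_p(\rho)$ and $\nu_p(\rho)$.

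First, I substitute \eqref{eq:qmfl-Mp} into \eqref{eq:qmfl-Mq} and expand by linearity:
\[
M_q(p,\rho)=w_{1,p}(\rho)\operatorname{Tr}(\rho E_p^{+})+w_{2,p}(\rho)\bigl(\operatorname{Tr}(\rho)-\operatorname{Tr}(\rho E_p^{-})\bigr).
\]
Second, I use $\operatorname{Tr}(\rho)=1$ and the definitions in Definition~\ref{def:qmfl-structure}. This rewrites the expression as $w_{1,p}(\rho)\mu_p(\rho)+w_{2,p}(\rho)\bigl(1-\nu_p(\rho)\bigr)$.

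Third, I match this with Definition~\ref{def:mediative-evaluation} applied at $(\mu,\nu)=(\mu_p(\rho),\nu_p(\rho))$. There $a=\mu$ and $b=1-\nu$. The parameters $\pi_p(\rho)$ and $\zeta_p(\rho)$ are by definition exactly $\pi(\mu,\nu)$ and $\zeta(\mu,\nu)$ from Definition~\ref{def:hes-contr}, evaluated at these arguments. Consequently the weights in \eqref{eq:qmfl-weights} coincide with those of Definition~\ref{def:mediative-operator}, and the expression equals $\mathcal{M}(\mu,1-\nu;\pi,\zeta)=M(\mu_p(\rho),\nu_p(\rho))$.

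The only point needing care is to confirm that the arguments lie in the intended domain. Since $E_p^{\pm}$ are effects and $\rho$ is a density operator, $\operatorname{Tr}(\rho E)\in[0,1]$, so $(\mu_p(\rho),\nu_p(\rho))\in V$. Also, $\pi$ and $\zeta$ cannot both be positive, and $\pi+\zeta/2\le 1$ holds, so axiom M1 is satisfied. This is bookkeeping rather than a real obstacle, since the identity itself holds by pure algebra regardless.
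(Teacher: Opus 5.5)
Your proposal is correct and follows the paper's proof essentially step for step. You substitute the mediative effect into the Born expectation and expand by linearity of the trace with $\operatorname{Tr}(\rho)=1$, which gives $w_{1,p}(\rho)\mu_p(\rho)+w_{2,p}(\rho)(1-\nu_p(\rho))$, and you then recognize this as $M(\mu_p(\rho),\nu_p(\rho))$; your extra domain check (that $\mu_p(\rho),\nu_p(\rho)\in[0,1]$ and that the weights satisfy M1) is a harmless addition the paper leaves implicit.
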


\begin{proof}
By Definition~\ref{def:qmfl-effect},
\[
  M_q(p,\rho)
  = \operatorname{Tr}\!\bigl(\rho\,M_p(\rho)\bigr)
  = \operatorname{Tr}\!\Bigl(\rho\bigl[w_{1,p}(\rho)E_p^{+}+w_{2,p}(\rho)(I-E_p^{-})\bigr]\Bigr).
\]
By linearity of the trace and $\operatorname{Tr}(\rho)=1$,
\[
  M_q(p,\rho)
  = w_{1,p}(\rho)\operatorname{Tr}\!\bigl(\rho E_p^{+}\bigr)
    + w_{2,p}(\rho)\bigl(1-\operatorname{Tr}(\rho E_p^{-})\bigr)
  = w_{1,p}(\rho)\,\mu_p(\rho) + w_{2,p}(\rho)\,\bigl(1-\nu_p(\rho)\bigr).
\]
By definition of the type-1 mediative evaluation, the right-hand side is exactly $M(\mu,\nu)$ evaluated at $\mu=\mu_p(\rho)$ and $\nu=\nu_p(\rho)$.
\end{proof}

\paragraph{Classical reduction and absence of coherences}
If $\rho$, $E_p^{+}$ and $E_p^{-}$ commute (hence are simultaneously diagonalizable), then $\mu_p(\rho)$ and $\nu_p(\rho)$ depend only on the diagonal entries in a common eigenbasis. In this sense, QMFL reduces operationally to the classical mediative semantics when quantum coherences play no operational role.

\paragraph{Minimal qubit-level instantiation}
A simple instantiation consistent with Definitions~\ref{def:qmfl-structure} and~\ref{def:qmfl-effect} can be given on $\mathcal{H}=\mathbb{C}^2$ by fixing $\rho = |0\rangle\langle 0|$ and choosing diagonal effects
\[
  E_{p}^{+} =
  \begin{pmatrix}
    \mu & 0\\
    0 & 0
  \end{pmatrix},
  \qquad
  E_{p}^{-} =
  \begin{pmatrix}
    \nu & 0\\
    0 & 0
  \end{pmatrix},
  \qquad
  \mu,\nu\in[0,1].
\]
Then $\mu_p(\rho)=\mu$ and $\nu_p(\rho)=\nu$, and Theorem~\ref{thm:qmfl-consistency} yields
\[
  M_q(p,\rho)=M(\mu,\nu).
\]
A concrete numerical instantiation for the obstacle detection case study is given in Section~\ref{sec:example}.

\paragraph{Finite-shot estimation and safety margins}
In an actual quantum implementation, $M_q(p,\rho)=\operatorname{Tr}(\rho\,M_p(\rho))$ must be estimated from a finite number of measurement shots. Let $X_1,\dots,X_N\in[0,1]$ denote the observed outcomes of an estimation procedure for $\operatorname{Tr}(\rho\,M_p(\rho))$, and define
\[
  \widehat{M}_q := \frac{1}{N}\sum_{k=1}^N X_k.
\]
Standard concentration bounds (e.g., Hoeffding's inequality) imply that $\widehat{M}_q$ concentrates around $M_q(p,\rho)$ as $N$ increases. Consequently, in safety-critical settings, decision rules based on $\widehat{M}_q$ should incorporate an explicit uncertainty margin so that finite-shot fluctuations cannot systematically overturn conservative actions in the presence of strong (possibly conflicting) evidence.

\section{Illustrative example: safety-first mediative sensor fusion in obstacle detection}
\label{sec:example}
We present a case study illustrating the safety-first interpretation of mediative truth degrees in a domain beyond the original control applications of MFL. We consider a simple obstacle-detection scenario in autonomous driving. This example shows that, in a conservative regime with low higher-order uncertainty, all mediative variants yield the same control decision, while still allowing a structured path to richer behaviours when uncertainty, granularity, or quantum effects matter. Throughout, we use a strict safety-first design: decision thresholds and aggregation weights are chosen so that strong evidence of danger—even if conflicting—favours braking or deceleration over proceeding as if no obstacle were present.

Let $p$ be the proposition
\[
  p : \text{``There is a dangerous obstacle within 20 meters in front of the vehicle.''}
\]
Assume that two independent perception channels are available: a radar/LiDAR channel, which is robust under bad weather but has limited resolution, and a camera-based detector, which is more precise under good visibility but sensitive to glare and occlusion. Each channel provides an assessment of $p$ as a mediative truth value $(\mu,\nu)$, where $\mu$ is the degree of support for $p$ and $\nu$ is the degree of support for $\neg p$.

\subsection{Safety-first aggregation and decision thresholds}
\label{sec:example-aggregation}

\begin{figure}[t]
  \centering
  \resizebox{1.0\textwidth}{!}{%
    \usetikzlibrary{calc}
\begin{tikzpicture}[
  font=\small,
  >=Latex,
  sensor/.style={rectangle,draw,rounded corners,align=center,inner sep=3pt},
  block/.style={rectangle,draw,rounded corners,align=center,inner sep=4pt,fill=blue!5},
  qblock/.style={rectangle,draw,rounded corners,align=center,inner sep=4pt,fill=purple!5},
  decision/.style={rectangle,draw,rounded corners,align=center,inner sep=4pt,fill=green!5},
  arrow/.style={->,thick},
  agg/.style={circle,draw,inner sep=1.5pt,fill=gray!10},
  every node/.style={align=center}
]

\node[font=\bfseries,anchor=west,align=center] (lefttitle)
  at (0,0)
  {Heterogeneous, uncertain, \\%
  possibly conflicting\\%
    evidence};

\node[sensor,anchor=west] (cam)
  at ([yshift=-1.5cm]lefttitle.west) {Camera};

\node[sensor,anchor=west] (radar)
  at ([yshift=-1.5cm]cam.west) {Radar / LiDAR};

\node[sensor,anchor=west] (weather)
  at ([yshift=-1.5cm]radar.west) {Weather\\sensor};

\node[sensor,anchor=west] (experts)
  at ([yshift=-1.5cm]weather.west) {Human\\experts};

\node[anchor=west,font=\scriptsize,inner sep=1pt] at ($(cam.east)+(0.1,0.15)$) {$\uparrow$ safe};
\node[anchor=west,font=\scriptsize,inner sep=1pt] at ($(cam.east)+(0.1,-0.15)$) {$\uparrow$ danger};

\node[anchor=west,font=\scriptsize,inner sep=1pt] at ($(radar.east)+(0.1,0.15)$) {$\uparrow$ safe};
\node[anchor=west,font=\scriptsize,inner sep=1pt] at ($(radar.east)+(0.1,-0.15)$) {$\uparrow$ danger};

\node[anchor=west,font=\scriptsize,inner sep=1pt] at ($(weather.east)+(0.1,0.15)$) {$\uparrow$ safe};
\node[anchor=west,font=\scriptsize,inner sep=1pt] at ($(weather.east)+(0.1,-0.15)$) {$\uparrow$ danger};

\node[anchor=west,font=\scriptsize,inner sep=1pt] at ($(experts.east)+(0.1,0.15)$) {$\uparrow$ safe};
\node[anchor=west,font=\scriptsize,inner sep=1pt] at ($(experts.east)+(0.1,-0.15)$) {$\uparrow$ danger};

\node[
  block,
  right=2.8cm of radar,
  minimum width=6cm,
  text width=6cm,
  align=center
] (extract)
  {Mediative encoding of\\[1pt]
   evidence into\\[1pt]
   mediative truth--falsity\\[1pt]
   pairs $(\mu,\nu)$};


\coordinate (bus) at ($(radar.east)+(1.7,0)$);

\draw[thick] (bus |- cam.east) -- (bus |- experts.east);

\draw[thick] (cam.east)     -- (bus |- cam.east);
\draw[thick] (radar.east)   -- (bus |- radar.east);
\draw[thick] (weather.east) -- (bus |- weather.east);
\draw[thick] (experts.east) -- (bus |- experts.east);

\draw[arrow] (bus |- radar.east) -- (extract.west);


\node[font=\scriptsize,anchor=west,align=left] (legend)
  at ([yshift=-1.3cm]experts.west)
  {Legend (for proposition $p$): $\uparrow$ danger = degree $\mu$ (support for $p$), \\$\uparrow$ safe = degree $\nu$ (support for $\neg p$); equivalently $b=1-\nu$ measures \\lack of support for safety.
};


\node[
  block,
  right=2.0cm of extract,
  minimum width=5.2cm,
  minimum height=1.2cm,
  yshift=1.66cm,] (t1) {
  Type-1 Mediative Fuzzy Logic (MFL-T1)\\[-1pt]
  {\scriptsize Mediative operator $M(\mu,\nu)$}
};

\node[
  block,
  below=0.45cm of t1,
  minimum width=5.2cm,
  minimum height=1.2cm,
  align=center
] (t2) {
  Type-2 Mediative Fuzzy Logic (MFL-T2)\\[-1pt]
  {\scriptsize Footprints of uncertainty for $\mu$ and $\nu$}
};

\node[
  block,
  below=0.45cm of t2,
  minimum width=5.2cm,
  minimum height=1.2cm,
  align=center
] (t3) {
  Type-3 granular MFL (MFL-T3)\\[-1pt]
  {\scriptsize Granular family $v^{(3)}(p)$ over expert--sensor--time granules}
};



\node[
  qblock,
  below=0.45cm of t3,
  minimum width=6.8cm,   
  align=center
] (qmfl)
  {Quantum Mediative Fuzzy Logic (QMFL)\\[1pt]
   Hilbert-space implementation of \\[1pt]
   mediative operator $M$ with quantum interference};

\draw[arrow] (extract.east) -- ++(0.4,0) |- (t1.west);
\draw[arrow] (extract.east) -- ++(0.4,0) |- (t2.west);
\draw[arrow] (extract.east) -- ++(0.4,0) |- (t3.west);
\draw[arrow] (extract.east) -- ++(0.4,0) |- (qmfl.west);


\node[
  decision,
  right=3.1cm of t2,
  yshift=-0.0cm, 
  minimum width=2.3cm,
  minimum height=3.0cm
] (dec) {};

\node[font=\bfseries,align=center] (variantsTitle)
  at ($ (t1.center |- lefttitle.north) + (0,-0.70) $)
  {MFL variants\\[1pt]\scriptsize Alternative MFL-based reasoning schemes \\ [1pt]\scriptsize(one active at a time)};

\draw ($(dec.north west)+(0.25,-0.35)$) rectangle ($(dec.north east)+(-0.25,-1.15)$);
\draw ($(dec.north west)+(0.25,-1.15)$) rectangle ($(dec.north east)+(-0.25,-1.95)$);
\draw ($(dec.north west)+(0.25,-1.95)$) rectangle ($(dec.south east)+(-0.25,0.35)$);

\node[font=\scriptsize] at ($(dec.north west)!0.5!(dec.north east)+(0,-0.75)$) {Emergency\\brake};
\node[font=\scriptsize] at ($(dec.north west)!0.5!(dec.north east)+(0,-1.55)$) {Cautious\\slow-down};
\node[font=\scriptsize] at ($(dec.north west)!0.5!(dec.north east)+(0,-2.35)$) {Proceed};

\coordinate (rbus) at ($(dec.west)+(-0.9,0)$);

\draw[thick] (rbus |- t1.east) -- (rbus |- qmfl.east);

\draw[thick] (t1.east)  -- (rbus |- t1.east);
\draw[thick] (t2.east)  -- (rbus |- t2.east);
\draw[thick] (t3.east)  -- (rbus |- t3.east);
\draw[thick] (qmfl.east) -- (rbus |- qmfl.east);

\draw[arrow] (rbus) -- (dec.west);

\node[font=\scriptsize,align=center,below=0.3cm of dec] (thresholdText)
  {Output degree $M(\mu,\nu)$ or $M_q$\\
   compared with safety-biased \\ thresholds to trigger \\ Brake / Slow-down / Go};

\end{tikzpicture}%
  }
  \caption{Instantiation of the mediative pipeline in the autonomous braking case study.}
  \label{fig:mfl-case-pipeline}
\end{figure}

Figure~\ref{fig:mfl-case-pipeline} summarizes how the proposed mediative pipeline is instantiated in this case study. The left panel shows heterogeneous sources of evidence and their mediative encoding into truth--falsity pairs $(\mu,\nu)$. The central panel highlights the four alternative Mediative Fuzzy Logic variants (MFL-T1, MFL-T2, MFL-T3, and QMFL), of which only one is active at a time. The right panel depicts the safety-first decision module, where the mediative degree $M(\mu,\nu)$ or $M_q$ is compared against safety-biased thresholds to trigger Emergency brake, Cautious slow-down, or Proceed.

Although the pipeline in Fig.~\ref{fig:mfl-case-pipeline} includes four potential evidence sources (camera, radar/LiDAR, weather sensor, and human experts), in the example below we use only two channels (camera and radar/LiDAR) to keep the notation and plots readable. The mediative encoding and aggregation extend directly to additional sources using the same scheme.

To maintain a strict safety-first approach, we specify (i) how the two channels are aggregated and (ii) how the resulting mediative value maps to control actions. We use a weighted aggregation of radar and camera assessments:
\begin{equation}\label{eq:fusion-aggregation}
  (\mu,\nu)
  = \bigl(\,\alpha \mu_{\mathrm{radar}} + (1 - \alpha)\mu_{\mathrm{cam}},\;
          \alpha \nu_{\mathrm{radar}} + (1 - \alpha)\nu_{\mathrm{cam}}\,\bigr),
\end{equation}
where $\alpha \in [0,1]$ reflects the relative trust in the radar channel under the current context (weather, visibility, road type, etc.). In poor visibility conditions, we take $\alpha$ closer to $1$, favoring radar; in more balanced conditions, we may take $\alpha = 0.5$.

On top of the mediative evaluation $M(\mu,\nu)$, we adopt a simple decision policy based on a braking threshold $T_{\mathrm{brake}} = 0.7$:
\begin{itemize}
  \item if $M(\mu,\nu) \ge 0.7$: the vehicle should brake decisively (emergency or strong braking),
  \item if $0.5 \le M(\mu,\nu) < 0.7$: the vehicle should decelerate and seek additional sensing,
  \item if $M(\mu,\nu) < 0.5$: the vehicle may continue cautiously while monitoring for changes.
\end{itemize}

The numerical examples below are chosen so that even mild contradictions are resolved in favour of safety when the radar channel strongly suggests the presence of an obstacle.

\paragraph{Three evidence configurations}
We consider three representative configurations. In Cases~1 and~3, we assume poor visibility and set $\alpha = 0.7$ to prioritize the radar/LiDAR channel. In Case~2, we model a highly ambiguous situation (e.g., night glare) and set $\alpha = 0.5$ to reflect a deliberately balanced fusion between radar/LiDAR and camera. Table~\ref{tab:mfl-example-cases} reports the radar/LiDAR and camera assessments, the aggregated mediative pair $(\mu,\nu)$ obtained from~\eqref{eq:fusion-aggregation}, the resulting hesitation $\pi$, contradiction $\zeta$, the mediative degree $M(\mu,\nu)$, and the corresponding safety-first control action.

\begin{table}[t]
\centering\scriptsize
\caption{Three illustrative evidence configurations and their aggregated mediative scores
(Cases 1 and 3 use $\alpha=0.7$, Case 2 uses $\alpha=0.5$).}
\label{tab:mfl-example-cases}
\begin{tabular}{c c c c c c c c}
\hline
Case $i$ &
$(\mu_{\mathrm{radar}},\nu_{\mathrm{radar}})$ &
$(\mu_{\mathrm{cam}},\nu_{\mathrm{cam}})$ &
$(\mu^{(i)},\nu^{(i)})$ &
$\pi^{(i)}$ &
$\zeta^{(i)}$ &
$M(\mu^{(i)},\nu^{(i)})$ &
Action \\
\hline
1 & (0.80, 0.10) & (0.40, 0.20) & (0.68, 0.13) & 0.19 & 0.00 & $\approx 0.716$ & brake decisively \\
2 & (0.90, 0.10) & (0.10, 0.90) & (0.50, 0.50) & 0.00 & 0.00 & $0.5$           & decelerate, seek more sensing \\
3 & (0.95, 0.05) & (0.20, 0.90) & (0.725, 0.305) & 0.00 & 0.03 & $\approx 0.724$ & brake decisively \\
\hline
\end{tabular}
\end{table}

Under the safety-first thresholds adopted above, configurations 1 and 3 trigger decisive braking ($M(\mu,\nu)\ge 0.7$), while configuration 2 triggers cautious slow-down ($0.5 \le M(\mu,\nu) < 0.7$). Note that configuration 3 exhibits explicit contradiction ($\zeta^{(3)}>0$), modelling the case where one channel provides strong counter-evidence while the other still suggests danger.

\subsection{Case 1: incomplete but non-contradictory information}
\label{sec:example-case1}

Suppose that under light fog, the radar detects a strong reflection consistent with an obstacle, while the camera has low confidence because the image is partially blurred. We model this as
\[
  (\mu_{\mathrm{radar}}, \nu_{\mathrm{radar}}) = (0.8, 0.1), \qquad
  (\mu_{\mathrm{cam}},   \nu_{\mathrm{cam}})   = (0.4, 0.2).
\]
Since fog favors the radar channel, we choose $\alpha = 0.7$. The aggregated mediative value for $p$ becomes
\[
  (\mu,\nu)
  = \bigl(0.7 \cdot 0.8 + 0.3 \cdot 0.4,\; 0.7 \cdot 0.1 + 0.3 \cdot 0.2\bigr)
  = (0.68, 0.13).
\]
Here $\mu + \nu = 0.81 < 1$, so the situation is one of incomplete information with hesitation
\[
  \pi(\mu,\nu) = 1 - \mu - \nu = 0.19, \qquad \zeta(\mu,\nu) = 0.
\]
Using the mediative evaluation
\[
  M(\mu,\nu) = \mathcal{M}(a,b;\pi,\zeta), \qquad a = \mu,\; b = 1 - \nu,
\]
and recalling that when $\zeta = 0$ the operator reduces to the intuitionistic combination, we obtain
\[
  a = 0.68, \qquad b = 1 - 0.13 = 0.87,
\]
\[
  M(\mu,\nu) = (1 - \pi)a + \pi b
    \approx 0.81 \cdot 0.68 + 0.19 \cdot 0.87 \approx 0.716.
\]
Thus, despite the hesitation, the mediative truth degree of $p$ is above the braking threshold $T_{\mathrm{brake}} = 0.7$. This aligns with a safety-first design: under poor visibility, strong radar evidence for an obstacle is enough to trigger decisive braking, even if the camera is uncertain.

\subsection{Case 2: strong but symmetric conflict}
\label{sec:example-case2}

Consider now a situation at night with glare: the radar still reports a strong reflection consistent with an obstacle, but the camera confidently ``sees'' an empty road ahead. We model this as
\[
  (\mu_{\mathrm{radar}}, \nu_{\mathrm{radar}}) = (0.9, 0.1), \qquad
  (\mu_{\mathrm{cam}},   \nu_{\mathrm{cam}})   = (0.1, 0.9).
\]
To reflect an unbiased fusion in this particularly ambiguous situation, we take $\alpha = 0.5$, obtaining
\[
  (\mu,\nu)
  = \bigl(0.5 \cdot 0.9 + 0.5 \cdot 0.1,\; 0.5 \cdot 0.1 + 0.5 \cdot 0.9\bigr)
  = (0.5, 0.5).
\]
In this case $\mu + \nu = 1$, so both hesitation and contradiction vanish:
\[
  \pi(\mu,\nu) = \max(0, 1 - \mu - \nu) = 0, \qquad
  \zeta(\mu,\nu) = \max(0, \mu + \nu - 1) = 0.
\]
Semantically, the available evidence is perfectly balanced between $p$ and $\neg p$. The mediative evaluation is then
\[
  a = \mu = 0.5, \qquad b = 1 - \nu = 0.5,
\]
\[
  M(\mu,\nu) = \mathcal{M}(a, b; 0,0) = a = 0.5.
\]
The resulting mediative truth degree is exactly intermediate. According to the policy above, the vehicle should decelerate and seek additional sensing (for example, by re-scanning, adjusting exposure, or relying more on other sensors), but an immediate emergency stop is not required. This configuration shows how MFL can represent a genuinely undecided state without forcing full braking or full trust in one channel.

\subsection{Case 3: explicit overdetermined contradiction with safety-first resolution}
\label{sec:example-case3}

To obtain a genuinely contradictory configuration, suppose instead that the vehicle receives conservative estimates in which each channel strongly defends its own assessment. For instance, the radar is almost certain that an obstacle is present, while the camera almost certainly indicates no obstacle due to a misleading reflection or occlusion:
\[
  (\mu_{\mathrm{radar}}, \nu_{\mathrm{radar}}) = (0.95, 0.05), \qquad
  (\mu_{\mathrm{cam}},   \nu_{\mathrm{cam}})   = (0.2,  0.9).
\]
In this configuration, we again take $\alpha = 0.7$ to reflect a design choice that, under severe conflict, favours the safer channel (radar) without completely discarding the camera. The aggregated value is
\[
  (\mu,\nu)
  = \bigl(0.7 \cdot 0.95 + 0.3 \cdot 0.2,\; 0.7 \cdot 0.05 + 0.3 \cdot 0.9\bigr)
  = (0.725, 0.305).
\]
Now $\mu + \nu \approx 1.03 > 1$, so we have overdetermined, potentially contradictory information, with
\[
  \pi(\mu,\nu) = 0, \qquad
  \zeta(\mu,\nu) = \mu + \nu - 1 \approx 0.03.
\]
The mediative evaluation becomes
\[
  a = \mu = 0.725, \qquad b = 1 - \nu = 0.695,
\]
\[
  w_1 = 1 - \pi - \zeta/2 \approx 0.985, \qquad
  w_2 = \pi + \zeta/2 \approx 0.015,
\]
\[
  M(\mu,\nu) = w_1 a + w_2 b
    \approx 0.985 \cdot 0.725 + 0.015 \cdot 0.695 \approx 0.724.
\]
Here, the mediative truth degree is slightly above $0.72$, comfortably above the braking threshold $T_{\mathrm{brake}}$. Even with explicit contradiction ($\zeta > 0$), the system resolves the conflict in favor of safety because the more trusted radar channel dominates. This shows that even mild contradictions should push $M(\mu,\nu)$ into a braking regime when at least one reliable sensor strongly supports a hazard.

\paragraph{QMFL instantiation for the three evidence configurations}
To connect QMFL with Table~\ref{tab:mfl-example-cases}, we fix $\mathcal{H}=\mathbb{C}^2$ and $\rho=|0\rangle\langle 0|$. For each configuration $i$, we encode the aggregated classical degrees $(\mu^{(i)},\nu^{(i)})$ as diagonal effects
\begin{equation}
\label{eq:qmfl-diagonal-encoding}
  E_{p,i}^{+} =
  \begin{pmatrix}
    \mu^{(i)} & 0\\
    0 & 0
  \end{pmatrix},
  \qquad
  E_{p,i}^{-} =
  \begin{pmatrix}
    \nu^{(i)} & 0\\
    0 & 0
  \end{pmatrix}.
\end{equation}

Let $M_{p,i}(\rho)$ be constructed from $(E_{p,i}^{+},E_{p,i}^{-})$ via Definition~\ref{def:qmfl-effect}, and define the corresponding QMFL score as $M_q^{(i)}(p):=\operatorname{Tr}\!\bigl(\rho\,M_{p,i}(\rho)\bigr)$. Since $\mu_p(\rho)=\mu^{(i)}$ and $\nu_p(\rho)=\nu^{(i)}$ under this encoding, Theorem~\ref{thm:qmfl-consistency} yields $M_q^{(i)}(p)=M(\mu^{(i)},\nu^{(i)})$. Therefore, in this conservative regime (low higher-order uncertainty and a minimal diagonal encoding), QMFL reproduces the same numerical mediative scores as MFL-T1 on the aggregated inputs, while the operational difference is that $M_q^{(i)}(p)$ would be estimated from finite-shot measurements in a concrete quantum implementation.

\subsection{Comparison across MFL variants}
\label{sec:example-comparison-variants}

The obstacle-detection example can also be interpreted within MFL-T2, MFL-T3, and QMFL. Under natural modelling assumptions (interval type-2 footprints centred around the type-1 values, homogeneous granules for the two sensors, and diagonal quantum effects in a classical limit), these variants induce the same scalar mediative degrees as type-1 MFL in Cases~1--3 and therefore the same control actions. This coincidence is a direct consequence of how the higher-type semantics are instantiated in this simple scenario.

For MFL-T2, we use interval type-2 footprints whose type-reduction returns exactly the same crisp degrees $(\mu_p,\nu_p)$ as in the type-1 case, so the mediative operator sees the same input pair. For MFL-T3, all granules associated with a given proposition share the same local mediative truth value, and the granular aggregation operator is idempotent; hence, the aggregated value coincides with that common local one. For QMFL, we restrict attention to diagonal effects and density matrices whose Born expectations reproduce the same degrees of truth and falsity as in the fuzzy semantics. In other words, the case study is evaluated in a conservative fragment in which second-order uncertainty, granular heterogeneity and quantum coherences are weak or absent, so the higher-type and quantum variants collapse to the same scalar decision values as type-1 MFL. In more complex scenarios, where footprints of uncertainty (FOUs), granular structure, or non-commuting effects play a substantive role, the four frameworks may diverge both semantically and numerically. The corresponding scalar mediative evaluations for Cases~1--3, based on the aggregated inputs in Table~\ref{tab:mfl-example-cases}, are collected in Table~\ref{tab:MFL-variants}.

\begin{table}[t]
  \centering\scriptsize
  \caption{Scalar mediative evaluations across MFL variants for the obstacle detection example.}
  \label{tab:MFL-variants}
  \begin{tabular}{lccc}
    \toprule
    Framework & Case 1 & Case 2 & Case 3 \\
    \midrule
    MFL (type-1)
      & $M(\mu,\nu) \approx 0.716$
      & $M(\mu,\nu) = 0.5$
      & $M(\mu,\nu) \approx 0.724$ \\
    MFL-T2
      & $\bar{M}_p \approx 0.716$
      & $\bar{M}_p \approx 0.5$
      & $\bar{M}_p \approx 0.724$ \\
    MFL-T3
      & $M_G(p) \approx 0.716$
      & $M_G(p) \approx 0.5$
      & $M_G(p) \approx 0.724$ \\
    QMFL
      & $M_q^{(1)}(p) \approx 0.716$
      & $M_q^{(2)}(p) \approx 0.5$
      & $M_q^{(3)}(p) \approx 0.724$ \\
    \bottomrule
  \end{tabular}
\end{table}

Beyond the numerical analysis above, the obstacle-detection scenario admits complementary readings in terms of MFL-T2, MFL-T3, and QMFL. These perspectives highlight how second-order uncertainty, granular structure, and quantum effects can be incorporated into mediative reasoning while preserving the safety-first philosophy.

\subsection{Type-2, type-3 and quantum mediative perspectives on the example}
\label{sec:example-extended}

The previous example was presented at the type-1 level, with crisp mediative truth values $(\mu,\nu)$ for each fused assessment of the obstacle proposition $p$. We now outline how the same scenario can be treated in MFL-T2, MFL-T3, and QMFL, and which design considerations are specific to each extension.

\subsubsection{Type-2 mediative extension (MFL-T2)}
\label{sec:example-MFL2}

In MFL-T2, the mediative truth value of $p$ is no longer a single pair $(\mu,\nu)$ but a pair of type-2 fuzzy sets $(\tilde{\mu}_p,\tilde{\nu}_p)$ on $[0,1]$. Intuitively, this represents second-order uncertainty about the degrees of agreement and non-agreement, for instance, due to sensor noise, calibration errors, or changing environmental conditions.

Under natural modelling assumptions (interval type-2 footprints centred around the type-1 values), the type-2 extension yields the same scalar mediative degrees as type-1 MFL in Cases~1--3 and therefore the same control actions. Concretely, after aggregating the radar and camera outputs at the primary level, we wrap the crisp pair into an interval type-2 pair $(\tilde{\mu}_p,\tilde{\nu}_p)$, from which type-2 hesitation and contradiction degrees are computed and fed to a type-2 mediative operator. The resulting mediative evaluation $\tilde{M}_p$ is itself a type-2 fuzzy set (or, after type-reduction, an interval of plausible mediative values).

A safety-first design must then specify how to interpret this interval. One conservative policy is to trigger decisive braking whenever the \emph{lower} endpoint of the type-reduced mediative interval exceeds the threshold $T_{\mathrm{brake}}$, and to trigger at least cautious deceleration whenever the \emph{upper} endpoint exceeds $T_{\mathrm{brake}}$. In this way, MFL-T2 captures uncertainty about the mediative degree itself and allows the control strategy to distinguish between clearly safe, borderline and clearly dangerous configurations at a second-order level.

\subsubsection{Granular view in MFL-T3}
\label{sec:example-MFL3}

In MFL-T3, mediative truth is organized in a granular way over a family of indices. We write
\begin{equation}
\label{eq:granule-form}
  g := (\text{sensor},\ \text{time window},\ \text{context}),
\end{equation}
and let $G$ denote the set of admissible granules (i.e., all indices of the form~\eqref{eq:granule-form}), so that each granule $g \in G$ carries its own mediative truth value $v_g(p) = (\mu_g,\nu_g)$ and mediative evaluation $M_g$. In the obstacle-detection example, we may take granules of the form
\[
  g = (\text{radar or camera},\; \text{current frame or recent window},\; \text{driving context}),
\]
so that short-term history (several consecutive frames), multiple sensors (radar, cameras, possibly LiDAR) and context (weather, road type) are all represented at the granular level.

The MFL-T3 semantics then aggregates the family $\{v_g(p) : g \in G\}$ through explicit granular aggregation operators as defined in Section~\ref{sec:MFL3}. For obstacle detection, a safety-first design requires these operators to satisfy two conditions:
(i) monotonicity with respect to each granule (stronger local evidence for an obstacle cannot decrease the global mediative degree), and
(ii) a dominance property ensuring that if any sufficiently trusted granule strongly supports the presence of a dangerous obstacle, then the aggregated mediative evaluation crosses the braking threshold. At the same time, the granular structure allows temporal smoothing and cross-checking between sensors: for instance, a single spurious camera frame can be downweighted if both radar and neighbouring frames disagree with it.

\subsubsection{Quantum mediative perspective (QMFL)}
\label{sec:example-QMFL}

In QMFL, the semantics of the obstacle proposition $p$ is specified by a quantum state $\rho$ together with two effects $E_p^{+}$ and $E_p^{-}$ on a Hilbert space $\mathcal{H}$, encoding, respectively, support for $p$ and support for $\neg p$ (cf.\ Section~\ref{sec:QMFL}). The mediative degrees are then obtained as Born expectations
\[
  \mu_p(\rho) = \mathrm{Tr}(\rho E_p^{+}), \qquad \nu_p(\rho) = \mathrm{Tr}(\rho E_p^{-}),
\]
from which hesitation and contradiction are derived, and the quantum mediative degree $M_q(p,\rho)$ is computed via the quantum mediative effect $M_p(\rho)$ defined in Definition~\ref{def:qmfl-effect}.

For the obstacle-detection task, a quantum instantiation may arise in two ways. In a \emph{simulation} setting, the pair $(\mu,\nu)$ obtained from classical MFL fusion is encoded into diagonal density operators and diagonal effects, so that $M_q(p,\rho)$ reproduces the classical mediative value while enabling quantum-inspired processing (e.g., superpositions of hypotheses or quantum walks on state graphs). In a more ambitious \emph{hardware} setting, part of the perception or decision pipeline is implemented on a quantum device, and $\rho$ captures both sensor information and intrinsic quantum noise.

From a design perspective, the safety-first requirement translates into constraints on the encoding map from sensor evidence to $(\rho, E_p^{+}, E_p^{-})$ and on the decision rule based on $M_q(p,\rho)$. In particular, finite-shot estimation and device noise should not systematically push the estimated quantum mediative degree below the braking threshold in situations where classical evidence already mandates braking. This typically leads to conservative margins in the threshold and to robust encoding schemes for $\rho$ and the associated effects.

\subsection{MFL-T2, MFL-T3 and QMFL outcomes for the three cases}
\label{sec:example-T2T3Q}

The three numerical cases in Sections~\ref{sec:example-case1} to \ref{sec:example-case3} were analyzed at the type-1 level, producing crisp mediative values $M(\mu,\nu)$ and associated control decisions. We now describe how the same three scenarios can be handled in MFL-T2, MFL-T3, and QMFL, and summarize the corresponding outcomes.

\subsubsection*{Type-2 mediative degrees for the three cases (MFL-T2)}

In MFL-T2, a mediative truth value is a pair $(\tilde{\mu},\tilde{\nu})$ of type-2 fuzzy sets on $[0,1]$. For the present example, we adopt a simple interval type-2 representation by wrapping each crisp pair $(\mu,\nu)$ in a small rectangle
\[
  \mu \in [\mu^-,\mu^+], \qquad \nu \in [\nu^-,\nu^+],
\]
with case-dependent half-widths chosen to reflect the expected sensor variability. For each case, we then consider a pessimistic slice $(\mu^-,\nu^+)$ and an optimistic slice $(\mu^+,\nu^-)$, yielding diagonal-corner mediative bounds
\[
  M^- := M(\mu^-,\nu^+), \qquad
  M^+ := M(\mu^+,\nu^-).
\]
This provides an illustrative interval $[M^-,M^+]$ of plausible mediative values for $p$.

\begin{table}[t]
  \centering \tiny
  \caption{Illustrative interval type-2 mediative degrees for the three cases.}
  \label{tab:type2-mediative-cases}
  \begin{tabular}{lccccl}
    \toprule
    Case & Crisp $(\mu,\nu)$ & Interval for $(\mu,\nu)$
         & $M^-$ & $M^+$ & Type-2 decision \\
    \midrule
    Case 1 & $(0.68, 0.13)$
           & $\mu \in [0.65,0.71],\ \nu \in [0.10,0.16]$
           & $\approx 0.686$ & $\approx 0.746$
           & $[M_L,M_U]$ mostly above $0.7$ (brake) \\
    Case 2 & $(0.50, 0.50)$
           & $\mu,\nu \in [0.45,0.55]$
           & $0.45$ & $0.55$
           & $[M_L,M_U]$ entirely below $0.7$ (no emergency brake) \\
    Case 3 & $(0.725, 0.305)$
           & $\mu \in [0.695,0.755],\ \nu \in [0.275,0.335]$
           & $\approx 0.695$ & $\approx 0.755$
           & $[M_L,M_U]$ clearly above $0.7$ (brake) \\
    \bottomrule
  \end{tabular}
\end{table}

In Case~1, the interval $[M_L,M_U]$ lies mostly above the braking threshold $T_{\mathrm{brake}} = 0.7$, so a safety-first policy still recommends decisive braking despite second-order uncertainty. In Case~2, the entire interval lies well below $0.7$, confirming that emergency braking is not justified and that controlled deceleration is appropriate. In Case~3, the whole interval is above $0.7$, so the safety-first resolution of a strong conflict remains braking even under type-2 uncertainty. The precise widths of the intervals are only illustrative; what matters is that MFL-T2 can express a band of mediative values and that the decision rule can be formulated in terms of $M_L$ and $M_U$ (e.g., braking whenever $M_L \ge T_{\mathrm{brake}}$).

\subsubsection*{Granular outcomes in MFL-T3}

In MFL-T3, mediative truth is assigned at the level of granules $g \in G$, where a typical granule has the form~\eqref{eq:granule-form}.

For the obstacle-detection example, a minimal granular description for each case is obtained by taking two granules
\[
  g_{\mathrm{radar}} = (\text{radar},\ \text{current window},\ \text{context}),
  \qquad
  g_{\mathrm{cam}}   = (\text{camera},\ \text{current window},\ \text{context}),
\]
with mediative values $v_{g_{\mathrm{radar}}}(p)$ and $v_{g_{\mathrm{cam}}}(p)$ matching the radar and camera assessments used in Cases~1--3. A type-3 granular aggregation operator then combines $\{v_g(p) : g \in G\}$ into a global mediative value $M_G(p)$.

If we choose a top-level granular aggregator that, in the two-granule case, reduces exactly to the type-1 fusion scheme used in Sections~\ref{sec:example-case1}--\ref{sec:example-case3} (e.g., a weighted combination with the same parameter $\alpha$), then the global MFL-T3 outcomes coincide numerically with the type-1 mediative values:
\[
  M_G^{(1)}(p) \approx 0.716,\quad
  M_G^{(2)}(p) = 0.5,\quad
  M_G^{(3)}(p) \approx 0.724
\]

for Cases~1, 2 and~3 respectively. The advantage of the type-3 setting is not a different scalar output in this minimal configuration, but rather the ability to systematically incorporate additional granules: multiple frames, multiple cameras, different radar modes, or contexts. Safety-first design then constrains the granular aggregation operator so that strong evidence for an obstacle in any sufficiently trusted granule forces $M_G(p)$ above $T_{\mathrm{brake}}$, while still allowing temporal smoothing and cross-checking between sources.

\subsubsection*{Quantum mediative degrees for the three cases (QMFL)}

In QMFL, a proposition $p$ is evaluated in a state $\rho$ via two effects $E_p^{+}$ and $E_p^{-}$ as in Section~\ref{sec:QMFL}. A simple embedding of the type-1 mediative values for the three cases into QMFL is obtained by considering $\mathcal{H}=\mathbb{C}^2$, fixing $\rho=|0\rangle\langle 0|$, and encoding each aggregated crisp pair $(\mu^{(i)},\nu^{(i)})$ as diagonal effects defined in~\eqref{eq:qmfl-diagonal-encoding}. Let $M_{p,i}(\rho)$ be constructed from $(E_{p,i}^{+},E_{p,i}^{-})$ via Definition~\ref{def:qmfl-effect}, and define $M_q^{(i)}(p) := \mathrm{Tr}\!\bigl(\rho\,M_{p,i}(\rho)\bigr)$. Then $\mu_p(\rho)=\mu^{(i)}$ and $\nu_p(\rho)=\nu^{(i)}$; thus, Theorem~\ref{thm:qmfl-consistency} yields $M_q^{(i)}(p)=M(\mu^{(i)},\nu^{(i)})$. Hence, at the level of ideal expectation values, QMFL reproduces the same mediative scores obtained classically:
\[
  M_q^{(1)}(p) \approx 0.716,\qquad
  M_q^{(2)}(p) = 0.5,\qquad
  M_q^{(3)}(p) \approx 0.724.
\]
In a realistic quantum implementation, these expectation values would be estimated from a finite number of shots, introducing statistical fluctuations around $M_q^{(i)}(p)$. A safety-first design must therefore ensure that the encoding of sensor evidence and the chosen braking threshold are robust with respect to such fluctuations. For example, one may introduce a safety margin by triggering braking whenever the estimated $M_q^{(i)}(p)$ exceeds $T_{\mathrm{brake}}$ by more than a small tolerance, so that quantum statistical noise cannot systematically suppress a braking action that would be required by the classical mediative evidence.

\section{Conclusions}
\label{sec:conclusions}
We have presented a unified account of Mediative Fuzzy Logic, from its type-1 foundations to type-2, type-3, and quantum extensions. At the core is a mediative operator that combines positive and negative channels under explicit control by hesitation and contradiction. This provides a single semantic mechanism for reconciling incomplete and genuinely conflicting evidence while remaining compatible with standard fuzzy infrastructures.

At the type-1 level, mediative truth values are formalized as truth–falsity pairs equipped with bilattice-like operations and a scalar mediative evaluation. A propositional system (MFL-T1) with a mediative connective is introduced. The resulting logic is algebraically and axiomatically well behaved, supports paraconsistent reasoning, and reduces to standard fuzzy and intuitionistic-fuzzy logics in the expected special cases. These properties establish MFL-T1 as a principled foundation for safety-first aggregation in settings where contradiction should not entail triviality.

We then extended the framework in three directions. The type-2 extension (MFL-T2) captures second-order uncertainty about truth, falsity, hesitation, and contradiction via interval type-2 truth values, supporting both type-reduced (crisp) and envelope (interval) interpretations. The type-3 extension (MFL-T3) organizes mediative truth in a granular, multi-level form, enabling explicit cross-granule aggregation policies that reflect domain requirements such as robustness to outliers or priority of trusted sources. Finally, the quantum extension (QMFL) embeds mediative reasoning in the effect-algebraic setting of quantum theory and connects the classical mediative score to a Born expectation of an adaptive mediative effect, thereby opening a path to quantum implementations and to effect-based quantum granular computing architectures.

The autonomous-driving sensor-fusion example demonstrates how mediative truth degrees can be interpreted conservatively in a safety-first regime and linked to concrete decision policies. It further illustrates that the higher-type and quantum variants reduce to type-1 behaviour under appropriate modelling assumptions, while retaining a clear route to richer behaviours when uncertainty, granular heterogeneity, or non-commuting effects become operationally significant.

Future research directions include completeness and representation theorems for the extended logics, development of proof calculi tailored to mediative reasoning, and further applications in control, diagnosis, decision support, and quantum technologies.
\section{Notation and Symbol Table} 
For convenience, Table~\ref{tab:mfl-symbols-extended} collects the main syntactic, semantic, and operational symbols used throughout the paper. We also recall the distinction between syntactic derivability, semantic entailment, and (mediative) validity in the present setting.

The sequent $\Gamma \vdash_m \varphi$ denotes syntactic derivability in the propositional system MFL-T1: the formula $\varphi$ can be obtained from the set of premises $\Gamma$ by a finite sequence of applications of the axioms and inference rules of the logic. This is a proof-theoretic notion and does not involve semantic evaluation.

Semantic entailment is written $\Gamma \models_m \varphi$. It means that for every mediative valuation $v$, if all premises $\psi\in\Gamma$ attain maximal mediative degree (i.e., $M(v(\psi))=1$), then so does $\varphi$ (i.e., $M(v(\varphi))=1$). This notion depends on the mediative semantics through the map $\varphi \mapsto v(\varphi)\in V$ and the induced scalar evaluation $M$.

Finally, $\models_m \varphi$ denotes mediative validity: $\varphi$ is valid if $M(v(\varphi))=1$ for every mediative valuation $v$, equivalently $\emptyset \models_m \varphi$. With these notions in place, soundness states that every formula derivable in MFL-T1 is semantically entailed under the mediative semantics.

\begin{table}[h]
\centering\scriptsize
\caption{Logical, semantic, and operational symbols used across MFL-T1, MFL-T2, MFL-T3, and QMFL.}
\label{tab:mfl-symbols-extended}
\renewcommand{\arraystretch}{1.15}
\begin{tabular}{lll}
\hline
\textbf{Symbol} & \textbf{Meaning} & \textbf{Illustrative example} \\
\hline

$\varphi,\psi$ 
& Propositional formulas 
& $\varphi := \text{``obstacle detected''}$ \\

$p,q$ 
& Atomic propositions 
& $p := \text{``radar detects obstacle''}$ \\

$\top$ 
& Truth constant (tautology) 
& $\top := (p \to p)$ \\

$\bot$ 
& Falsity constant 
& $\bot := \neg \top$ \\

$\neg$ 
& Negation 
& $\neg \varphi$ \\

$\wedge$ 
& Conjunction 
& $\varphi \wedge \psi$ \\

$\vee$ 
& Disjunction 
& $\varphi \vee \psi$ \\

$\to$ 
& Implication (residuum-based) 
& $\varphi \to \psi$ \\

$\leftrightarrow$ 
& Bi-implication 
& $(\varphi \to \psi)\wedge(\psi \to \varphi)$ \\

$\Med(\varphi)$ 
& Mediative connective applied to $\varphi$ 
& ``mediated evaluation of $\varphi$'' \\

$v$ 
& Mediative valuation 
& $v(\varphi) = (\mu_\varphi,\nu_\varphi)$ \\

$(\mu_\varphi,\nu_\varphi)$ 
& Truth and falsity degrees (type-1) 
& $(0.5,0.0)$ \\

$\pi,\zeta$ 
& Hesitation and contradiction degrees 
& $\pi=\max(0,1-\mu-\nu),\;\zeta=\max(0,\mu+\nu-1)$ \\

$M(\mu,\nu)$ 
& Scalar mediative evaluation 
& $M(0.5,0.0)=0.75$ \\

$\Gamma$ 
& Set of formulas (theory) 
& $\Gamma=\{\varphi,\varphi\to\psi\}$ \\

$\vdash_m$ 
& Formal derivability in MFL-T1 (proof-theoretic) 
& $\Gamma \vdash_m \varphi$ \\

$\models_m$ 
& Semantic entailment under mediative semantics 
& $M(v(\psi))=1\ \forall\psi\in\Gamma \Rightarrow M(v(\varphi))=1$ \\

$\models_m \varphi$ 
& Mediative validity 
& $M(v(\varphi))=1$ for all mediative valuations $v$ \\

\hline
\multicolumn{3}{c}{\textbf{Type-2 Mediative Fuzzy Logic (MFL-T2)}} \\
\hline

$\tilde{\mu},\tilde{\nu}$ 
& Interval type-2 truth and falsity fuzzy sets 
& e.g., $\mathrm{Proj}^{U}(\tilde{\mu})=[0.6,0.8]$ \\

$\underline{\mu},\overline{\mu}$ 
& Lower and upper bounds of truth 
& $\mu\in[\underline{\mu},\overline{\mu}]$ \\

$\underline{\nu},\overline{\nu}$ 
& Lower and upper bounds of falsity 
& $\nu\in[\underline{\nu},\overline{\nu}]$ \\

$T$ 
& Chosen $t$-norm 
& e.g., $T(a,b)=\max(0,a+b-1)$ (Łukasiewicz) \\

$S$ 
& $t$-conorm dual to $T$ 
& e.g., $S(a,b)=\min(1,a+b)$ (Łukasiewicz) \\

$\Rightarrow_T$ 
& Residuum induced by $T$ 
& $a\Rightarrow_T b := \sup\{c\in[0,1]\mid T(a,c)\le b\}$ \\

$H(\mu,\nu)$ 
& Hesitation function 
& $H=\max(0,1-\mu-\nu)$ \\

$C(\mu,\nu)$ 
& Contradiction function 
& $C=\max(0,\mu+\nu-1)$ \\

$H_L,H_U$ 
& Lower/upper hesitation bounds 
& $H_L=\max(0,1-\overline{\mu}-\overline{\nu}),\ H_U=\max(0,1-\underline{\mu}-\underline{\nu})$ \\

$C_L,C_U$ 
& Lower/upper contradiction bounds 
& $C_L=\max(0,\underline{\mu}+\underline{\nu}-1),\ C_U=\max(0,\overline{\mu}+\overline{\nu}-1)$ \\

$\bar{M}_p$,\ $[M_L(p),M_U(p)]$ 
& Type-reduced / envelope mediative degrees 
& $\bar{M}_p := M(\bar{\mu}_p,\bar{\nu}_p)$,\quad $[M_L,M_U]$ \\

\hline
\multicolumn{3}{c}{\textbf{Type-3 Granular Mediative Fuzzy Logic (MFL-T3)}} \\
\hline

$g$ 
& Granule index (source/context) 
& $g=\text{radar},\text{camera}$ \\

$G$ 
& Set of granules 
& $G=\{g_{\mathrm{radar}},g_{\mathrm{cam}}\}$ \\

$v_g(\varphi)$ 
& Local mediative valuation at granule $g$ 
& $v_g(\varphi)=(\mu_{\varphi,g},\nu_{\varphi,g})$ \\

$M_g(\varphi)$ 
& Local scalar mediative degree 
& $M_g(\varphi):=M(\mu_{\varphi,g},\nu_{\varphi,g})$ \\

$A_\varphi$ 
& Granular aggregation operator 
& $M_G(\varphi):=A_\varphi((M_g(\varphi))_{g\in G})$ \\

$M_G(\varphi)$ 
& Global mediative degree after aggregation 
& $M_G(\varphi)\in[0,1]$ \\

\hline
\multicolumn{3}{c}{\textbf{Quantum Mediative Fuzzy Logic (QMFL)}} \\
\hline

$\rho$ 
& Quantum state (density operator) 
& $\rho\succeq 0,\ \mathrm{Tr}(\rho)=1$ \\

$E$ 
& Quantum effect 
& $0 \preceq E \preceq I$ \\

$E_p^{+},E_p^{-}$ 
& Positive/negative evidence effects for $p$ 
& $E_p^{+}\neq I-E_p^{-}$ allowed \\

$\mu_p(\rho),\nu_p(\rho)$ 
& Born degrees (truth/falsity channels) 
& $\mu_p(\rho)=\mathrm{Tr}(\rho E_p^{+}),\ \nu_p(\rho)=\mathrm{Tr}(\rho E_p^{-})$ \\

$w_{1,p}(\rho),w_{2,p}(\rho)$ 
& Mediative weights 
& $w_{1,p}=1-\pi_p-\zeta_p/2,\ w_{2,p}=\pi_p+\zeta_p/2$ \\

$\pi_p(\rho)$ 
& Quantum hesitation degree 
& $\pi_p=\max(0,1-\mu_p-\nu_p)$ \\

$\zeta_p(\rho)$ 
& Quantum contradiction degree 
& $\zeta_p=\max(0,\mu_p+\nu_p-1)$ \\

$M_p(\rho)$ 
& Quantum mediative effect for $p$ 
& $M_p(\rho)=w_{1,p}E_p^{+}+w_{2,p}(I-E_p^{-})$ \\

$M_q(p,\rho)$ 
& Quantum mediative degree (Born expectation) 
& $M_q(p,\rho)=\mathrm{Tr}(\rho\,M_p(\rho))$ \\

\hline
\end{tabular}
\end{table}



\section*{Declaration of competing interest}
The author declares no known competing financial interests or personal relationships that could have appeared to influence the work reported in this paper.



\section*{Data availability}
No datasets were generated or analyzed during the current study.
All theoretical results and examples are fully described in the article.

\section*{Declaration of generative AI and AI-assisted technologies in the writing process}
The author declares that no generative AI or AI-assisted technologies were used to write the manuscript or to generate figures or analytical
results. Only standard text editing tools were employed.

\bibliographystyle{elsarticle-num}
\bibliography{mfl_refs}

\end{document}